\newtheorem{theorem}{Theorem}
\newcommand{\cmark}{\ding{51}}%
\newcommand{\xmark}{\ding{55}}%
\newcommand{\Tr}[1]{\text{\normalfont Tr}\left(#1\right)}
\newcommand{\E}[1]{\text{\normalfont E}\left[#1\right]}
\newcommand{\Var}[1]{\text{\normalfont Var}\left(#1\right)}
\newcommand{\norm}[1]{\left\lVert#1\right\rVert}
\newcommand{\Cov}[1]{\text{\normalfont Cov}\left(#1\right)}
\newcommand{\X}{\mathbb{X}}
\newcommand{\Y}{\mathbb{Y}}
\newcommand{\Z}{\mathbb{Z}}
\newcommand{\e}{\bm e}
\newcommand{\x}{\bm x}
\newcommand{\z}{\bm z}
\newcommand{\InterVar}{\normalfont \text{InterVar}}
\newcommand{\IntraVar}{\normalfont \text{IntraVar}}
\newcommand{\paragraphtitle}[1]{\vskip 0.3 em \noindent\textbf{#1}}
  \providecommand\BibTeX{{%
    \normalfont B\kern-0.5em{\scshape i\kern-0.25em b}\kern-0.8em\TeX}}}
\begin{document}
\fancyhead{}

\title{Dimensionwise Separable 2-D Graph Convolution for Unsupervised and Semi-Supervised Learning on Graphs}

\author{Qimai Li}
\authornote{These authors contributed equally.}
\email{csqmli@comp.polyu.edu.hk}
\orcid{0000-0002-6705-7939}
\affiliation{%
  \institution{The Hong Kong Polytechnic University}
  \country{}
}

\author{Xiaotong Zhang}
\authornotemark[1]
\email{zhangxt@dlut.edu.cn}
\affiliation{%
  \institution{School of Software \\ Dalian University of Technology}
  \country{}
}

\author{Han Liu}
\authornotemark[1]
\email{hanliu@dlut.edu.cn}
\affiliation{%
  \institution{School of Software \\ Dalian University of Technology}
  \country{}
}

\author{Quanyu Dai}
\email{daiquanyu@huawei.com}
\affiliation{%
  \institution{Huawei Noah's Ark Lab}
  \country{}
}

\author{Xiao-Ming Wu}
\authornote{Corresponding Author}
\email{xiao-ming.wu@polyu.edu.hk}
\affiliation{%
  \institution{The Hong Kong Polytechnic University}
  \country{}
}

\begin{abstract}

Graph convolutional neural networks (GCN) have been the model of choice for graph representation learning, which is mainly due to the effective design of graph convolution that computes the
representation of a node by aggregating those of its neighbors. However, existing GCN variants commonly use 1-D graph convolution that solely operates on the object link graph without exploring informative relational information among object attributes. This significantly limits their modeling capability and may lead to inferior performance on noisy and sparse real-world networks. In this paper, we explore 2-D graph convolution to jointly model object links and attribute relations for graph representation learning. Specifically, we propose a computationally efficient dimensionwise separable 2-D graph convolution (DSGC) for filtering node features. Theoretically, we show that DSGC can reduce intra-class variance of node features on both the object dimension and the attribute dimension to learn more effective representations. Empirically, we demonstrate that by modeling attribute relations, DSGC achieves significant performance gain over state-of-the-art methods for node classification and clustering on a variety of real-world networks. The source code for reproducing the experimental results is available at \url{https://github.com/liqimai/DSGC}.

\end{abstract}

\begin{CCSXML}
<ccs2012>
<concept>
<concept_id>10010147.10010178</concept_id>
<concept_desc>Computing methodologies~Artificial intelligence</concept_desc>
<concept_significance>500</concept_significance>
</concept>
</ccs2012>
\end{CCSXML}

\ccsdesc[500]{Computing methodologies~Artificial intelligence}

\keywords{2-D graph convolution, node classification, node clustering, variance reduction}


\maketitle
\sloppy

\section{Introduction}



Graph-structured data occurs in citation networks \cite{yang2015network}, social networks \cite{ye2018deep}, traffic networks\cite{DBLP:conf/kdd/WuZWP20}, protein networks\cite{hamilton2017inductive}, knowledge graphs \cite{DBLP:conf/kdd/WangLJLF20,DBLP:conf/kdd/Pei0Y020} and many other application fields. Analyzing graph data often leads to new insights and interesting discoveries. For example, unsupervised learning on graphs enables to detect user communities in social networks, while semi-supervised learning on graphs helps to predict protein functions in protein networks. Learning effective node representations by utilizing graph structures and other aspects of information such as node content has proved very useful for unsupervised and semi-supervised learning on graphs.




\begin{figure}
    \centering
        \includegraphics[width=0.23\textwidth]{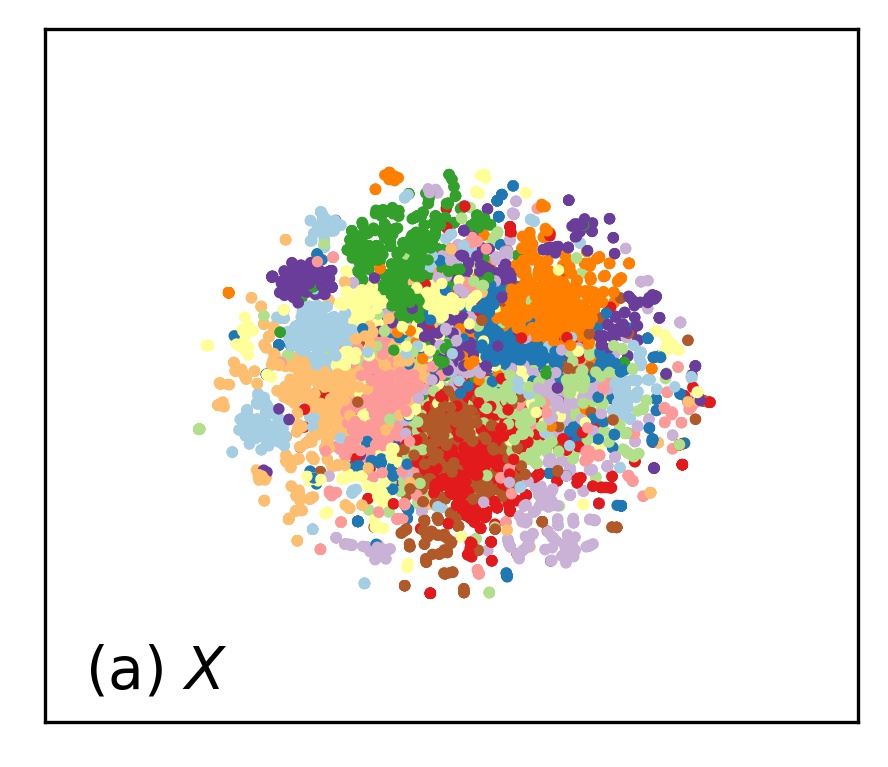}
        \includegraphics[width=0.23\textwidth]{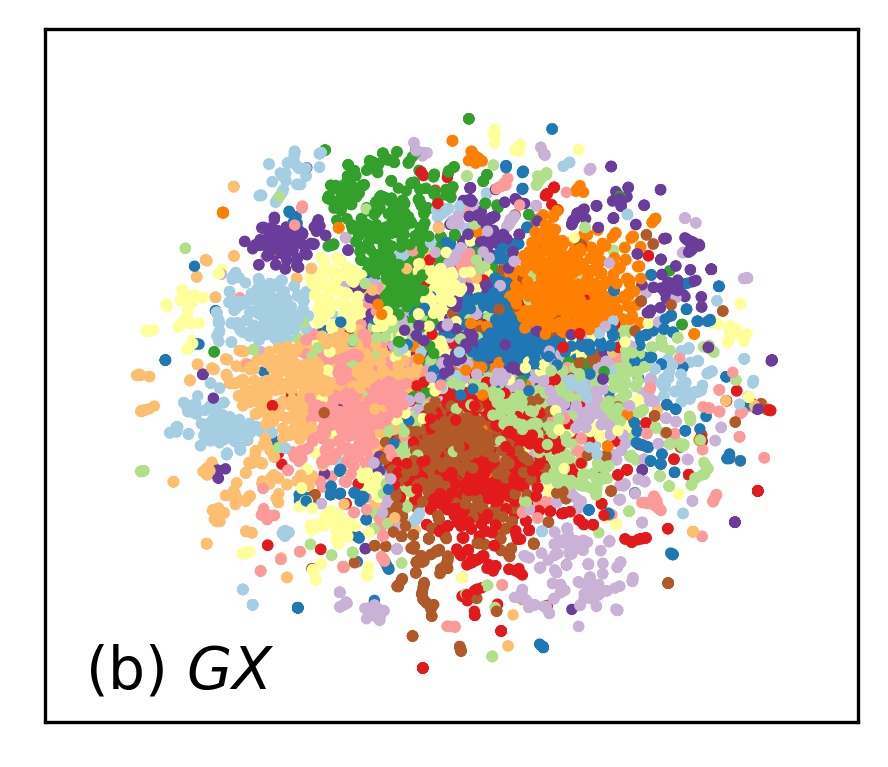}\\
        \includegraphics[width=0.23\textwidth]{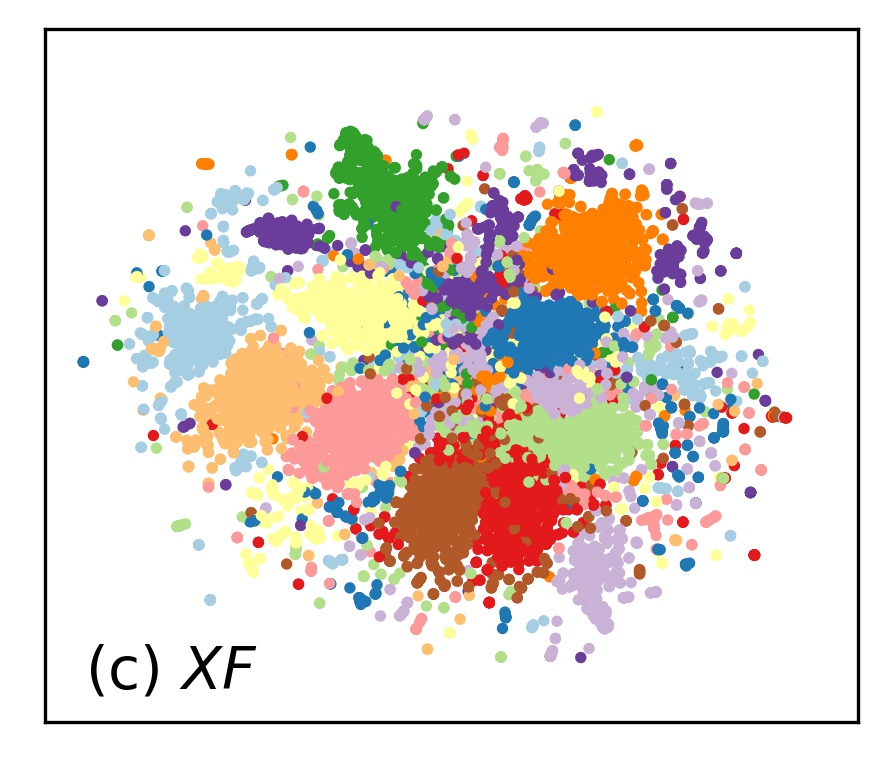}
        \includegraphics[width=0.23\textwidth]{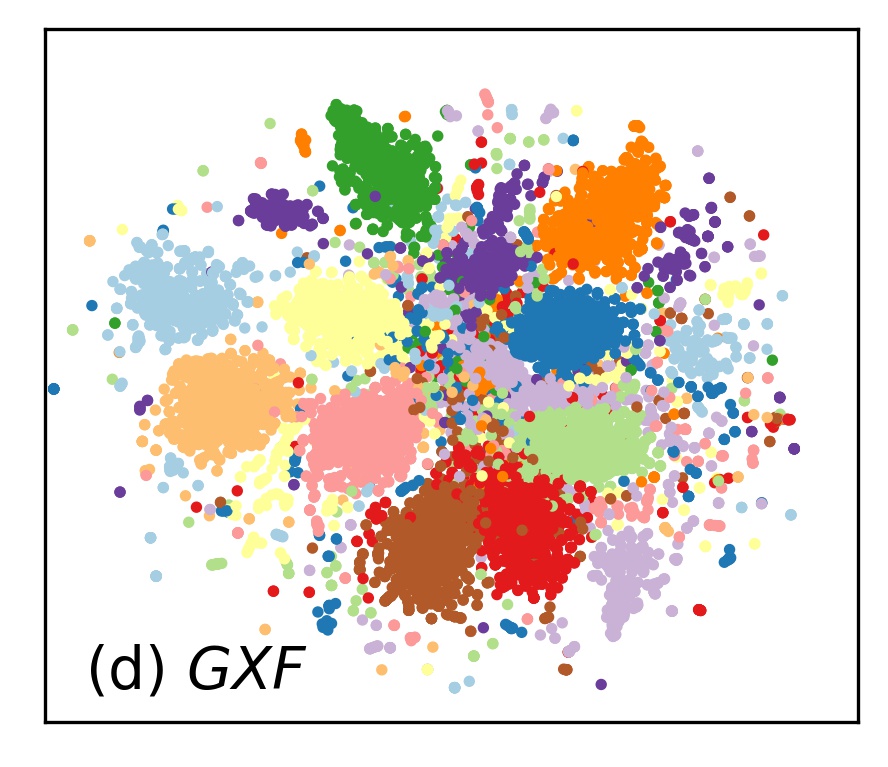}
    \caption{t-SNE visualization of the ``20 Newsgroups'' dataset. (a) Raw features; (b) Filtered by the regular object graph convolution; (c) Filtered by our proposed attribute graph convolution; (d) Filtered by our proposed dimensionwise separable graph convolution (DSGC).}
    \label{fig:visualization}
    \vspace{-0.15in}
\end{figure}

Previous approaches for unsupervised learning have explored applying non-negative matrix factorization, random walk statistics, and Laplacian eigenmaps on both graph structures and node attributes \cite{PanWZZW16,huang2017accelerated} to learn node representations. For semi-supervised learning, a common way is to regularize a supervised classifier trained with node features by a Laplacian regularizer or an embedding-based regularizer to take into account graph structures, e.g., manifold regularization \cite{belkin2006manifold}, manifold denoising \cite{Hein07}, deep semi-supervised embedding \cite{weston2012deep}, and Planetoid \cite{yang2016revisiting}. However, these methods model node connectivity and node content separately and hence may not be able to fully utilize these information.


In the past few years, graph convolutional neural networks (GCN) and variants have dominated the research of graph representation learning and achieved new state-of-the-art results in various learning tasks on graphs, especially in unsupervised learning \cite{wang2017mgae,ijcai/0003LLW19} and semi-supervised learning \cite{kipf2016semi,hamilton2017inductive,velickovic2017graph,li2019label}. The success is mainly attributed to graph convolution, a function that naturally combines node connectivity and node content for feature propagation and smoothing, which computes the representation of a node by aggregating the features of its neighbors. 
The effective utilization of both modalities of data gives the unique advantage to GCN-based methods over previous approaches, including topology-only models~\cite{perozzi2014deepwalk,grover2016node2vec} and graph regularization based methods~\cite{belkin2006manifold,yang2016revisiting}.

In spite of their empirical success, one major limitation of most existing GCN-based methods is that they commonly adopt
one-dimensional (1-D) graph convolution that operates on the \emph{object link graph} to model node (object) relations and features, whose performance critically relies on the quality of the graph. However, real-life networks are often noisy and sparse. For example, in a web graph such as Wikipedia, a hyperlink between two webpages does not necessarily indicate that they belong to the same category, and mixing their features could be harmful for webpage classification or clustering. Moreover, it has been shown that many real-world networks are scale-free and there exist many low-degree nodes \cite{albert2002statistical}. Since these nodes may have very few or even no links to other nodes, it is difficult, and even impossible, to do feature propagation to endow them with similar features as other same-class nodes to facilitate downstream learning tasks.


To address the above limitation, we propose to explore data relations in another dimension, by constructing an \emph{attribute affinity graph}, which encodes relations between object attributes. The underlying assumption is attributes that indicate the same category should have strong relations. For example, in a citation network, object attributes are words, and documents of AI category usually contain words such as ``learning'', ``robotics'', ``machine'', ``neural'', etc. These indicative words for AI category should be more closely related than other non-indicative words. These informative relations can then be utilized for feature smoothing, similar to the use of object links. Importantly, the attribute affinity graph can be a useful complement to the object link graph in learning node representations. For instance, consider a document that has no links to others (e.g., node e or d in Figure~\ref{fig:GXF}), and hence it is impossible to do feature propagation with object links. Yet, with meaningful attribute relations, feature propagation can still be performed along the attribute dimension, and it is possible to obtain similar features for same-class documents.

To formalize the above insight, we propose to perform graph convolution on the attribute affinity graph (Figure~\ref{fig:visualization} (c)). Further, we develop an efficient two-dimensional (2-D) graph convolution to perform feature smoothing on both the object link graph and the attribute affinity graph to learn more effective node representations (Figure~\ref{fig:visualization} (d)). Our main contributions are described as follows.
\begin{itemize}[leftmargin=*]
	\item \textbf{Methodology}: We propose to use 2-D graph convolution to jointly model object links, attribute relations, and object features for node representation learning. Furthermore, we develop a computationally efficient dimensionwise separable 2-D graph convolutional filter (DSGC), which is equivalent to performing 1-D graph convolution alternately on the object dimension and the attribute dimension, as illustrated in Figure~\ref{fig:GXF}. Finally, we propose two learning frameworks based on DSGC for unsupervised node clustering and semi-supervised node classification.

	\item \textbf{Theoretical insight}: We show that the regular 1-D graph convolution on the object link graph can reduce intra-class variance of node features, which helps to explain the success of many existing methods. Further, we show that the same can be proved for graph convolution on a properly constructed attribute affinity graph. Jointly, they provide a theoretical justification of DSGC.

	\item \textbf{Empirical study}: We implement DSGC for semi-supervised node classification and unsupervised node clustering, and conduct extensive experiment on a variety of real-world networks including email networks, citation networks, and web graphs. The comparison with state-of-the-art methods demonstrate the advantages of DSGC over the regular 1-D graph convolution. Moreover, we show that DSGC can be easily plugged into some strong GCN-based methods to further improve their performance substantially.

\end{itemize}

\section{Related Works}

In this work, we focus on the studies of graph convolutional neural networks for unsupervised and semi-supervised learning. There are a large number of prior works for unsupervised and semi-supervised learning on graph-structured data. We can only briefly review a small portion of them due to space limitation. We first review non-GCN-based methods, then review GCN-based methods. 

\paragraphtitle{Network Embedding} 
emerges as an effective way for learning node representations in recent years by leveraging different techniques, such as Markov random walks \cite{perozzi2014deepwalk}, matrix factorization \cite{yang2015network}, autoencoders \cite{cao2016deep}, and generative adversarial nets \cite{dai2018adversarial}. The learned node representations can boost various downstream learning tasks, such as node clustering \cite{wang2016structural} and node classification \cite{perozzi2014deepwalk}.

\paragraphtitle{Graph-based Semi-supervised Learning.}\quad
Early works in this line explicitly or implicitly adopt the assumption that nearby vertices are likely to have the same label. The most popular framework is label propagation \cite{Zhu03, Zhou03,chapelle2005semi,Wu12parw}, which formulates a quadratic regularization framework to implement the cluster assumption and enforce the consistency with labeled data. Other ideas include using graph partition techniques to place the cuts in low density regions \cite{blum2004semi}, using spectral kernels to learn smooth low-dimensional embeddings \cite{chapelle03,zhang06}, and modified adsorption \cite{talukdar2009new} and iterative classification algorithm \cite{sen2008collective}.
It was shown in \cite{ekambaram2013wavelet} and \cite{girault2014semi} that the graph regularization in many of these methods can be interpreted as low-pass graph filters. However, these methods are limited in their ability to incorporate vertex features for prediction. To jointly model graph structures and vertex features, many methods adopt an idea to regularize a supervised learner (e.g., support vector machines, neural networks) with some regularizer such as a Laplacian regularizer or an embedding-based regularizer. Examples include manifold regularization (LapSVM) \cite{belkin2006manifold}, deep semi-supervised embedding \cite{weston2012deep}, and Planetoid \cite{yang2016revisiting}.


\paragraphtitle{Graph Convolutional Neural Networks.}\quad
In the past few years, a series of works based on graph convolutional neural networks \cite{LiTBZ15,icml/QuBT19} have demonstrated promising performance in both semi-supervised and unsupervised learning. 

\textbf{Semi-supervised learning.} The pioneering work ChebyNet \cite{defferrard2016convolutional} exploits a $k$-th order polynomial filter via Chebyshev expansion to avoid the expensive eigen-decomposition in spectral graph convolution. GCN \cite{kipf2016semi} simplifies ChebyNet by designing an efficient layer-wise propagation rule with a first-order approximation, which inspires many follow-up works.
Specifically, graph attention networks  \cite{velickovic2017graph} and gated attention networks \cite{ZhangSXMKY18} introduce attention mechanisms to assign different weights to different nodes in a neighborhood. Some researches \cite{li2018deeper} provide deeper insights into GCN by showing the Laplacian smoothing nature and low-pass filtering nature of GCN-like models from spatial and spectral views, respectively. Instead of using two hops of neighbors, some later works such as JKNet \cite{xu2018representation}, LanczosNet \cite{liao2019lanczosnet}, MixHop \cite{icml/Abu-El-HaijaPKA19}, and GCNII \cite{icml/Chen20} propose to utilize multiple hops of neighbors.


\textbf{Unsupervised Learning.} One kind of methods stack several graph convolutional layers and learn node representations based on feature/structure preserving, e.g., graph autoencoder and graph variational autoencoder \cite{kipf2016variational}, GraphSAGE \cite{hamilton2017inductive}, marginalized graph autoencoder \cite{wang2017mgae}. Since these methods lack additional constraints to enhance the robustness of representations, adversarially regularized (variational) graph autoencoder \cite{pan2018adversarially} further adopts GAN to match the learned embeddings with a Gaussian prior. Another type of methods exploit contrastive learning to train an encoder to be contrastive between similar nodes and dissimilar ones for node representation learning, e.g., deep graph infomax \cite{velivckovic2018deep} and graphical mutual information \cite{www/PengHLZRXH20}. 

However, existing GCN-based methods mainly concentrate on 1-D graph convolution. To our knowledge, the only exception is \cite{monti2017geometric}, which explores 2-D graph convolution for matrix completion for recommendation. In this work, we investigate 2-D graph convolution for unsupervised and semi-supervised learning. 


\section{2-D Graph Convolution}

Graph signal processing is an active research field in recent years, which generalizes basic concepts in harmonic analysis, including signals, filters, Fourier transform, and convolution, to the graph domain. Given a graph $\mathcal{G}$ with a vertex set $\mathcal{V}$ and an adjacency matrix $\bm A$, if we associate each vertex $\nu_i$ with a real value $x_i$, then a signal on $\mathcal{G}$ can be defined as a vector $\bm x=[x_i]$. Graph filters are defined as mappings between input and output signals. 
Let $\bm G$ be a polynomial of $\bm A$ (usually normalized), i.e, $\bm G=p(\bm A)$, then $\bm G$ is a legit convolutional filter on graph $\mathcal{G}$, and the corresponding 1-D graph convolution is defined as
\begin{equation}
    \bm z = \bm G \bm x,
\end{equation}
where $\bm z$ is the output signal. Existing graph convolutional filters are all defined in this way and have achieved considerable success in various learning tasks on graphs.


However, previous research mainly focuses on the design and application of 1-D graph convolution. In this section, we present 2-D graph convolution for learning node representations. A comprehensive introduction to multi-dimensional graph convolution is provided by \cite{kurokawa2017multi}. Based on the theory developed by \cite{kurokawa2017multi}, we propose a localized 2-D graph convolution to circumvent the computationally intensive graph Fourier transform. Furthermore, we propose an even simpler dimensionwise separable 2-D graph convolution to efficiently model both object links and attribute relations.

\subsection{2-D Graph Signal and Spectral Convolution}
\paragraphtitle{2-D Graph Signal.}\quad A 2-D graph signal is a function defined on the Cartesian product of the vertex sets of two graphs. Formally, given two graphs ${\mathcal{G}}^{(1)}$ and ${\mathcal{G}}^{(2)}$ with $n$ and $m$ nodes respectively, and denote the vertex sets by ${\mathcal{\bm V}}^{(1)}$ and ${\mathcal{\bm V}}^{(2)}$. A real-valued signal defined on them is a function $x:{\mathcal{\bm V}}^{(1)}\times{\mathcal{\bm V}}^{(2)}\to\mathbb{R}$. For convenience, we simply denote $x(\nu^{(1)}_i, \nu^{(2)}_j)$ by $x_{ij}$ and organize then as a matrix:
\begin{equation}
    \bm X = (x_{ij})
    \in \mathbb{\bm R}^{n\times m},
    \quad x_{ij} = x(\nu^{(1)}_i, \nu^{(2)}_j),
\end{equation}
Signal $\bm X$ associates each node pairs $(\nu_i,\nu_j)\in\mathcal{V}^{(1)}\times \mathcal{V}^{(2)}$ with a real number $x_{ij}$. For example, the feature matrix given by usual node classification tasks is a 2-D signal defined on object link graph and attribute affinity graph.


\paragraphtitle{2-D Graph Fourier Transform}\quad
Define the graph Laplacian of ${\mathcal{G}}^{(1)}$ and ${\mathcal{G}}^{(2)}$ as $\bm L_{(1)}=\bm D^{(1)}-\bm A^{(1)}$ and $\bm L_{(2)}=\bm D^{(2)}-\bm A^{(2)}$, where $\bm A^{(1)}$, $\bm A^{(2)}$ are adjacency matrices and $\bm D^{(1)}$, $\bm D^{(2)}$ are the corresponding degree matrices.
\footnote{Discussion below also applies to row-normalized Laplacian $\bm L_r=\bm I - \bm D^{-1}\bm A$, column-normalized Laplacian, $\bm L_c = \bm I - \bm A \bm D^{-1}$, and symmetric normalized Laplacian $\bm L_s = \bm I - \bm D^{-1/2}\bm A \bm D^{-1/2}$.}
Assuming two Laplacian matrices have following eigen-decomposition
\begin{equation}
    \bm L_{(1)} = \bm U \bm \Lambda \bm U^{-1}, \quad
    \bm L_{(2)} = \bm V \bm M \bm V^{-1}, 
\end{equation}
where $\bm \Lambda, \bm M$ stores the eigenvalues $\lambda_i,\mu_j$  of two Laplacian matrices in their main diagonals, $\bm U=[\bm u_1,\cdots, \bm u_{n}]$ and $\bm V=[\bm v_1, \cdots, \bm v_{m}]$ store the corresponding unit-length eigen-vectors in their columns. This eigen-decomposition is always attainable for undirected graphs and nearly always attainable for directed graphs.

All $n\times m$ outer products $\bm u_i \bm v_j^\top$ together form a basis for the linear space $\mathbb{R}^{n\times m}$. It is known as 2-D graph Fourier basis -- an analogy of the Fourier basis in classical harmonic analysis in graph domain. 
A 2-D graph signal $\bm X$ can be decomposed in this basis with coefficients $s_{ij}$:
\begin{equation}
    \bm X=\sum\nolimits_{ij} s_{ij} \left(\bm u_i \bm v_j^\top\right),
\end{equation}
or in matrix form:
\begin{equation}\label{eq:inverse_fourier}
    \bm X = \bm U\bm S\bm V^\top,\; \text{where}\;\bm S= (s_{ij}) \in \mathbb{R}^{n\times m}.
\end{equation}
$\bm S$ is called the spectrum or Fourier coefficients of signal $\bm X$ and can be obtained by formula 
\begin{equation}\label{eq:fourier}
\bm S=\bm U^{-1}\bm X (\bm V^{-1})^\top.
\end{equation}
Eq.~(\ref{eq:fourier}) is so-called 2-D graph Fourier transform; Eq.~(\ref{eq:inverse_fourier}) is the corresponding inverse transform.

\paragraphtitle{2-D Spectral Graph Convolution}\quad
Given above decomposition, now we can manipulate the spectrum of 2-D signals and define 2-D spectral graph convolution.
Convolution is an operation that takes a signal as input and outputs another signal. By convolution theorem, convolution is equivalent to scaling entries of the spectrum. Thus, given a signal $X$ with spectrum $S$, a 2-D spectral graph convolution with $X$ as input is defined as:
\begin{equation}\label{eq:spectral_conv}
\bm Z = \bm U(\bm S\circ \bm P)\bm V^\top,
\end{equation}
where $\bm P$ is the spectral kernel (parameters in spectral domain) of this convolution, and `$\circ$' is Hadamard (entry-wise) product. 

\subsection{Fast Localized 2-D Graph Convolution}\label{sec:spatial_conv}


Although Eq.~(\ref{eq:spectral_conv}) well defines 2-D graph convolution, it is often impractical to perform convolution in the spectral domain, due to the high cost of computing the eigenbasis $\bm U$, $\bm V$ needed for Fourier transform. Similar to what \cite{defferrard2016convolutional} did to 1-D graph convolution, we propose 2-D spatial graph convolution here to avoid intensive computation. To achieve this goal, we need to parameterize the entries of spectral kernel $\bm P$ as a two-variable polynomial $p(\cdot,\cdot)$ of eigenvalues $\lambda, \mu$ such that $ P_{ij}=p(\lambda_i, \mu_j)$. Denote coefficients of the polynomial by $\bm \Theta = (\theta_{k_1k_2}) \in\mathbb{R}^{n\times m}$, then $\bm P$ is parameterized as
\begin{equation}
    P_{ij} = p(\lambda_i, \mu_j)= \sum_{k_1=0}^{n-1} \sum_{k_2=0}^{m-1} \theta_{k_1k_2} \lambda^{k_1}_i \mu^{k_2}_j.
\end{equation}
or in matrix form:
\begin{equation}\label{eq:parameterization}
    \bm P = \sum_{k_1=0}^{n-1} \sum_{k_2=0}^{m-1} \theta_{k_1k_2} \bm \Lambda^{k_1} \mathds{1} \bm M^{k_2},
\end{equation}
where $\mathds{1}$ denotes all-one matrix of shape $n\times m$. Substitute Eq.~(\ref{eq:parameterization}) into Eq.~(\ref{eq:spectral_conv}) and rearrange it, 2-D graph convolution will become
\begin{align}
    \bm Z 
    & = \sum\nolimits_{k_1k_2} \theta_{k_1k_2} 
    \bm U \left(\bm S\circ (\bm \Lambda^{k_1} \mathds{1} \bm M^{k_2})\right)\bm V^\top \nonumber\\
    & = \sum\nolimits_{k_1k_2} \theta_{k_1k_2} 
    \bm U \bm \Lambda^{k_1} \left(\bm S \circ \mathds{1} \right) \bm M^{k_2} \bm V^\top \nonumber \\
    & = \sum\nolimits_{k_1k_2} \theta_{k_1k_2} 
    \left( \bm U \bm \Lambda^{k_1} \bm U^{-1} \right) 
    \bm X 
    \left( \bm V \bm M^{k_2} \bm V^{-1} \right)^\top \nonumber \\
    & = \sum\nolimits_{k_1k_2} \theta_{k_1k_2} 
    \bm L_{(1)}^{k_1} \bm X (\bm L_{(2)}^{k_2})^\top
    \label{eq:spatial_conv} 
\end{align}
Eq.~(\ref{eq:spatial_conv}) is called 2-D spatial graph convolution, as it manipulates the signal $\bm X$ in the spatial domain. Eigen-decomposition of Laplacian and Fourier transformation of $\bm X$ is no longer required. Parameters $\bm \Theta = (\theta_{k_1k_2})$ in Eq.~(\ref{eq:spatial_conv}) are called the (spatial) kernel of this convolution. Similar to \cite{defferrard2016convolutional}, this spatial convolutional filter is localized. Denote by $K_1$ and $K_2$ the largest exponent of $\bm L_{(1)}$ and $\bm L_{(2)}$ in the polynomial, i.e., $k_1>K_1$ or $k_2>K_2$ implies $\theta_{k_1k_2}=0$.
The convoluted signal $z_{ij}$ of vertex pair $(\nu^{(1)}_i, \nu^{(2)}_j)$ only depends on the neighbourhood of $\nu^{(1)}_i$ within $K_1$ hops and the neighbourhood of $\nu^{(2)}_j$ within $K_2$ hops, so the filter is said to be $K_1$-localized on ${\mathcal{G}}^{(1)}$ and $K_2$-localized on ${\mathcal{G}}^{(2)}$, and the kernel $\bm \Theta$ is said to be of size $(K_1+1) \times (K_2+1)$.

\subsection{Dimensionwise Separable 2-D Graph Convolution (DSGC)}

Although the above spatial graph convolution avoids the computationally expensive Fourier transform, its general form with kernel size $K_1\times K_2$ still involves at least $K_1 \times K_2$ matrix multiplications. Inspired by the depth-wise separable convolution proposed in \cite{howard2017mobilenets}, we streamline spatial graph convolution by restricting the rank of $\bm \Theta$ to be one.
Consequently, $\bm \Theta$ is able to be decomposed as an outer product of two vectors $\bm \theta^{(1)}\in \mathbb{R}^n$ and $\bm \theta^{(2)}\in \mathbb{R}^m$, i.e., $\bm \Theta = \bm\theta^{(1)}\bm\theta^{(2)\top}$ and $\theta_{k_1k_2}=\theta^{(1)}_{k_1}\theta^{(2)}_{k_2}$. 
Finally, the 2-D spatial graph convolution in Eq.~(\ref{eq:spatial_conv}) becomes
\begin{align}
    \bm Z 
    & = \sum_{k_1=0}^{K_1} \sum_{k_2=0}^{K_2} \theta^{(1)}_{k_1}\theta^{(2)}_{k_2} 
    \bm L_{(1)}^{k_1} \bm X (\bm L_{(2)}^{k_2})^\top \\
    & = \left( \sum_{k_1=0}^{K_1}\theta^{(1)}_{k_1}\bm L_{(1)}^{k_1} \right) \bm X 
    \left( \sum_{k_2=0}^{K_2} \theta^{(2)}_{k_2} \bm L_{(2)}^{k_2} \right)^\top \label{eq:separable}
    =\bm G\bm X \bm F, \\
    &\text{where } \bm G=\sum_{k_1=0}^{K_1}\theta^{(1)}_{k_1}\bm L_{(1)}^{k_1}
    \text{ and } \bm F=\sum_{k_2=0}^{K_2} \theta^{(2)}_{k_2} (\bm L_{(2)}^{k_2})^\top.
\end{align}
We refer to Eq.~(\ref{eq:separable}) as dimensionwise separable graph convolution (DSGC), $\bm G$ as the \emph{object graph convolutional filter}, and $\bm F$ as the \emph{attribute graph convolutional filter}. The fastest way to compute it only requires $K_1+K_2$ matrix multiplications, much less than the $K_1 \times K_2$ matrix multiplications needed by a general 2-D spatial graph convolution. 
Fig.~\ref{fig:GXF} illustrates how $\bm G$ and $\bm F$ can work together to learn better node representations with DSGC.


\begin{figure}
    \centering
    \includegraphics[width=0.47\textwidth]{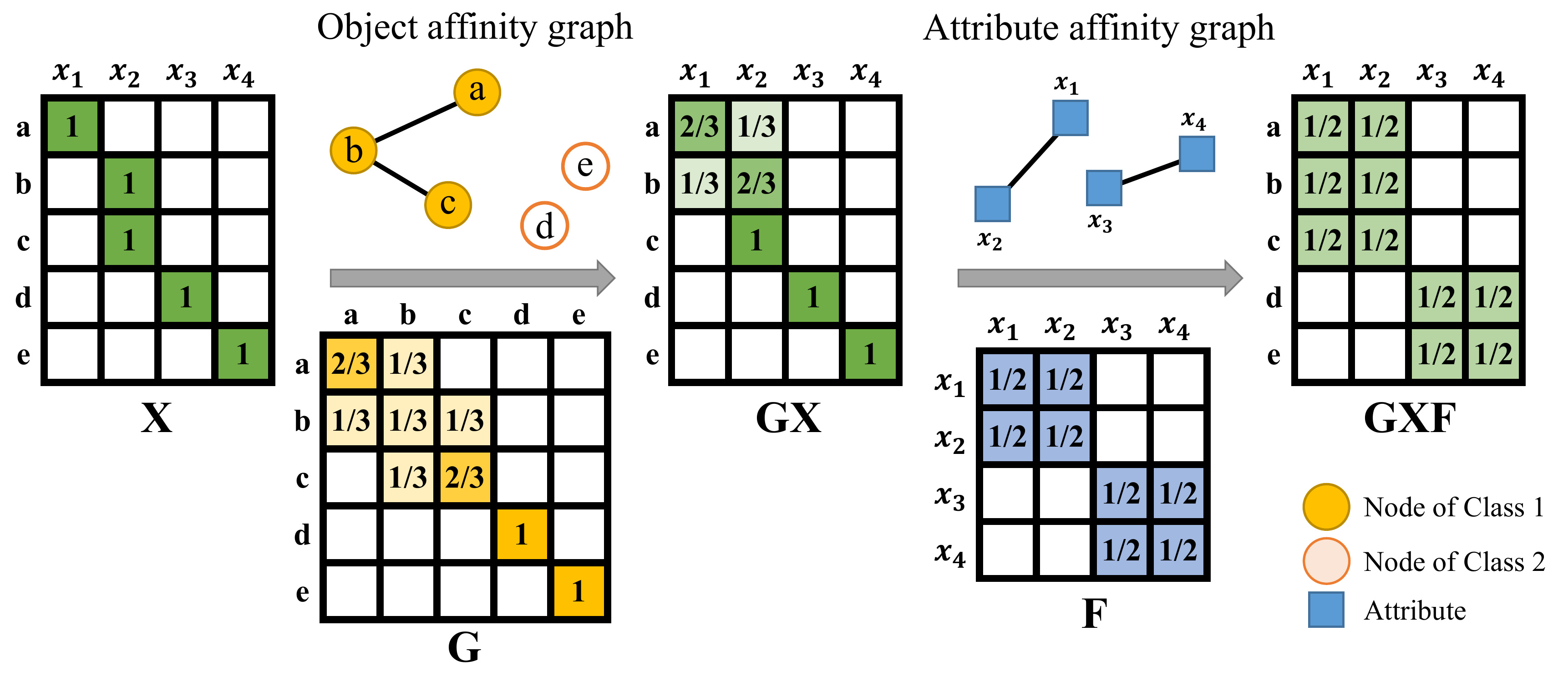}
    \vskip -0.1in
    \caption{Conceptual illustration of DSGC by a toy example. The node representations (row vectors) obtained by DSGC ($\bm G\bm X\bm F$) is better than those in the originals feature matrix ($\bm X$) or filtered by 1-D graph convolution ($\bm {GX}$), in the sense that nodes of the same class have more similar features.}\label{fig:GXF}
    \vskip -0.15in
\end{figure}
\section{Variance Reduction by DSGC}\label{sec:analysis}

Given a data distribution, the lowest possible error rate an classifier can achieve is the Bayes error rate \cite{fukunaga2013introduction}, which is caused by the intrinsic overlap between different classes and cannot be avoided. In this section, we show that DSGC with proper filters can reduce intra-class variance of the data distribution while keeping class centers roughly unchanged, hence reducing the overlap between classes and improving learning performance.

\paragraphtitle{Intra-class Variance and Inter-class Variance}\quad
Suppose samples $\x_i$ and their labels $y_i$ are observations of a random vector $\X=[\X_1, \cdots, \X_m]^\top$ and a random variable $\Y$ respectively. We define the variance of random vector $\X$ to be the sum of the variance of each dimension $\X_j$, i.e., the trace of the covariance matrix of $\X$. According to law of total variance \cite{grinstead2012introduction}, the variance of $\X$ can be divided into intra-class variance and inter-class variance:
\begin{align}
    \Var{\X}
    & = \underbrace{\E{\Var{\X|\Y}}}_{\text{Intra-class Variance}}
    + \underbrace{\Var{\E{\X|\Y}}}_{\text{Inter-class Variance}},
\end{align}
where the conditional variance $\Var{\X|\Y=k}$ is the variance of class $k$ and the conditional expectation $\E{\X|\Y=k}$ is the $k$-th class center. Intra-class variance (IntraVar) measures the average divergence within each class, while inter-class variance (InterVar) measures the divergence among class centers. We are interested in the IntraVar$/$InterVar ratio. Desired node representations should have low intra-class variance (i.e., compact and dense for each class), and high inter-classes variance (large margin between classes).

\subsection{Intra-class Variance Reduction by Object Graph Convolution}\label{sec:nodegraph}
Commonly used object graph convolution reduces variance by averaging over neighborhood. For any node $\nu_i$, \emph{object graph convolution} $\bm G\bm X$ produces a new feature vector $\z_i = \sum_j G_{ij} \x_j$. When $\bm G$ is a stochastic matrix, the output feature vector $\z_i$ is a weighted average of the neighbours of $\bm x_i$. Denote by $\Z$ a random vector of $\z_i$. Intuitively, as long as each node $i$ has enough same-class neighbours, $\Z$ will have a smaller $\IntraVar/\InterVar$ ratio than $\X$. 

Formally, assume that objects from different classes are connected with probability $q$, and classes are balanced, i.e., $\Pr(\Y=k)=1/K$ for each class $k$. Then, with the stochastic graph filter $\bm G={\bm D^{-1} \bm A^{(1)}}$, we have the following theorem.
\begin{theorem}
When $q$ is sufficiently small, the $\IntraVar/\InterVar$ ratio of $\Z$ is less than or equal to that of $\X$, i.e.,
	\begin{align}
	\frac{\E{\Var{\Z|\Y}}}{\Var{\E{\Z|\Y}}} \le \frac{\E{\Var{\X|\Y}}}{\Var{\E{\X|\Y}}}.
	\end{align}
\end{theorem}

The proof is given in the Appendices \ref{ap:theorem1}. This theorem tells that under the assumption that connected nodes are most likely to be of the same class, object graph convolution $\bm G\bm X$ can efficiently reduce the IntraVar/InterVar ratio.

\subsection{Intra-class Variance Reduction by Attribute Graph Convolution}\label{sec:attributegraph}
A proper attribute graph convolutional filter $\bm F$ can also reduce the IntraVar/InterVar ratio. We use the convention that the random vector $\X$ is a column vector, and hence the \emph{attribute graph convolution} $\bm X \bm F$ results in a new random vector $\bm F^\top\X$. We also assume that the node features are mean-centered, i.e. $\E{\X}=\bm0$.
\begin{theorem}\label{thm:F-intra}
    If the attribute graph convolutional filter $\bm F$ is a doubly stochastic matrix, then the output of attribute graph convolution has an intra-class variance less than or equal to that of \;$\X$, i.e.,
\begin{align*}
\sum\nolimits_i F_{ij} = \sum\nolimits_j &F_{ij} = 1 \;\text{and}\; F_{ij}\ge0,\forall\;i,j \\
    \qquad&\Rightarrow\quad \E{\Var{\bm F^\top\X|\Y}} \le \E{\Var{\X|\Y}}.
\end{align*}
\end{theorem}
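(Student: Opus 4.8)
The plan is to reduce the statement to a per-class linear-algebra inequality and then exploit positive semidefiniteness of the conditional covariance matrices together with the structure of doubly stochastic matrices. First I would use that the variance of a random vector is, by definition, the trace of its covariance matrix, and that the outer expectation $\E{\cdot}$ over $\Y$ is monotone; hence it suffices to prove the inequality conditionally for each fixed class $k$. Writing $\bm\Sigma_k=\Cov{\X\mid\Y=k}$, the transformation rule of covariance under the linear map $\X\mapsto\bm F^\top\X$ gives $\Cov{\bm F^\top\X\mid\Y=k}=\bm F^\top\bm\Sigma_k\bm F$, so the goal reduces to showing, for every class $k$,
\[
\Tr{\bm F^\top\bm\Sigma_k\bm F}\le\Tr{\bm\Sigma_k}.
\]
Here $\bm\Sigma_k$ is symmetric positive semidefinite, being a conditional covariance matrix; note the mean-centering assumption $\E{\X}=\bm0$ is not needed for this particular bound, since conditional covariance is already centered by the conditional mean.

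Next I would isolate the defining property of doubly stochastic matrices. Using cyclic invariance of the trace, I rewrite the gap as
\[
\Tr{\bm\Sigma_k}-\Tr{\bm F^\top\bm\Sigma_k\bm F}=\Tr{\bm\Sigma_k\left(\bm I-\bm F\bm F^\top\right)},
\]
so everything hinges on establishing $\bm I-\bm F\bm F^\top\succeq0$. The cleanest argument observes that $\bm M:=\bm F\bm F^\top$ is itself doubly stochastic: it is entrywise nonnegative, and $\bm M\bm1=\bm F(\bm F^\top\bm1)=\bm F\bm1=\bm1$ since both the row sums and the column sums of $\bm F$ equal $1$. A symmetric, nonnegative matrix with all row sums equal to $1$ has spectral radius exactly $1$ by Perron--Frobenius, and symmetry makes its eigenvalues real and at most $1$; equivalently, the Birkhoff--von Neumann theorem writes $\bm F$ as a convex combination of permutation matrices, whence $\norm{\bm F}_2\le1$ by convexity of the spectral norm. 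Either way $\bm F\bm F^\top\preceq\bm I$.

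Finally I would combine the pieces using the fact that the product of two positive semidefinite matrices has nonnegative trace, i.e. $\Tr{\bm A\bm B}=\Tr{\bm A^{1/2}\bm B\bm A^{1/2}}\ge0$ for $\bm A,\bm B\succeq0$. Applying this with $\bm A=\bm\Sigma_k$ and $\bm B=\bm I-\bm F\bm F^\top$ yields the per-class inequality, and averaging over $\Y$ with weights $\Pr(\Y=k)$ gives $\E{\Var{\bm F^\top\X\mid\Y}}\le\E{\Var{\X\mid\Y}}$. I expect the main obstacle to be the operator-norm bound $\bm F\bm F^\top\preceq\bm I$: this is exactly where double stochasticity (both row and column sums equal to $1$) is essential, since mere row-stochasticity would not control $\bm F\bm F^\top$. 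The remaining trace manipulations are routine.
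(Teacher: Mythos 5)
Your proof is correct, and it diverges from the paper's at the key step. Both arguments share the same skeleton: reduce to a per-class statement, use $\Cov{\bm F^\top\X|\Y=k}=\bm F^\top\bm\Sigma_k\bm F$, and apply cyclicity of the trace to land on $\Tr{\bm \Sigma_k\bm F\bm F^\top}$. From there the paper expands the trace entrywise, bounds each covariance by Cauchy--Schwarz as $\Cov{\X_i,\X_j|\Y=k}\le\sigma_i\sigma_j$ (which requires $(\bm F\bm F^\top)_{ij}\ge0$ to preserve the inequality), collapses the sum to the quadratic form $\bm\sigma^\top\bm F\bm F^\top\bm\sigma$, and finishes with the spectral bound $\norm{\bm F}_2\le1$. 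You instead prove the operator inequality $\bm F\bm F^\top\preceq\bm I$ directly (via Perron--Frobenius on the symmetric doubly stochastic matrix $\bm F\bm F^\top$, or Birkhoff--von Neumann) and conclude with $\Tr{\bm A\bm B}\ge0$ for positive semidefinite $\bm A,\bm B$. Your route is cleaner and slightly stronger in exposition: it uses the nonnegativity and the row/column-sum conditions in exactly one place, makes explicit why mere row-stochasticity would not suffice, and actually supplies the justification for the eigenvalue bound that the paper asserts in a one-line comment. The paper's route buys an elementary, coordinate-level derivation that avoids invoking the PSD trace inequality, at the cost of an extra Cauchy--Schwarz step. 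Your side remark that the mean-centering assumption $\E{\X}=\bm 0$ is not needed here is also correct; the paper's proof does not use it either.
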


The proof for Theorem~\ref{thm:F-intra} is given in the Appendices~\ref{ap:theorem23}.

To achieve a low IntraVar/InterVar ratio, in addition to reducing intra-class variance, we also need to keep the class centers apart after convolution, which then depends on the quality of the attribute affinity graph. A good attribute affinity graph should connect attributes that share similar expectations conditioned on $\Y$. Formally, each attribute $\X_j$ has $K$ conditional expectations w.r.t. $\Y$, which are denoted as a vector $\e_j=(\E{\X_j|\Y=1}, \cdots, \E{\X_j|\Y=K})\in\mathbb{R}^{K}$. We have the following.
\begin{theorem}\label{thm:F-inter}
    If \;$\forall F_{ij}\not=0$, $\norm{\e_i-\e_j}_2\le\varepsilon$, then the distance between $\e_j$ and $\widehat{\e}_j = \sum_i F_{ij}\e_i$ is also less than or equal to $\varepsilon$, i.e.,
    {\normalfont
    \begin{equation*}
        \norm{\e_i - \e_j}_2\le \varepsilon,\;\forall F_{ij}\not=0
        \quad\Rightarrow\quad \norm{\e_j - \widehat{\e}_j}_2 \le \varepsilon,
    \end{equation*}}
    and $\varepsilon$ can be arbitrarily small with a proper $\bm F$.
\end{theorem}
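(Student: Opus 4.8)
The plan is to reduce the statement to convexity of the Euclidean norm. The only structural fact I need is that $\bm F$ is column-stochastic, i.e. $\sum_i F_{ij}=1$ and $F_{ij}\ge0$, which is part of the doubly stochastic hypothesis carried over from Theorem~\ref{thm:F-intra}; this is exactly what makes $\widehat{\e}_j=\sum_i F_{ij}\e_i$ a genuine convex combination of the conditional-expectation vectors $\e_i$. The key algebraic move is to insert the constant $\e_j$ into the sum using $\sum_i F_{ij}=1$, writing $\e_j=\sum_i F_{ij}\e_j$, so that the error becomes a weighted average of edge differences:
\begin{equation*}
    \e_j - \widehat{\e}_j = \sum_i F_{ij}\e_j - \sum_i F_{ij}\e_i = \sum_i F_{ij}\left(\e_j - \e_i\right).
\end{equation*}

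From here I would apply the triangle inequality (equivalently, Jensen's inequality for the convex function $\norm{\cdot}_2$) to pull the norm inside the weighted sum, and then bound each surviving term:
\begin{equation*}
    \norm{\e_j - \widehat{\e}_j}_2 = \norm{\sum_i F_{ij}\left(\e_j - \e_i\right)}_2 \le \sum_i F_{ij}\norm{\e_j - \e_i}_2.
\end{equation*}
I then split the sum according to whether the edge is present. For every $i$ with $F_{ij}\neq0$ the hypothesis gives $\norm{\e_j-\e_i}_2=\norm{\e_i-\e_j}_2\le\varepsilon$, while for every $i$ with $F_{ij}=0$ the summand vanishes. Hence each term is at most $F_{ij}\varepsilon$, and using $\sum_i F_{ij}=1$ one more time yields $\norm{\e_j-\widehat{\e}_j}_2\le\varepsilon\sum_i F_{ij}=\varepsilon$, which is the claimed bound.

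For the final clause---that $\varepsilon$ can be made arbitrarily small with a proper $\bm F$---I would argue by construction rather than with a single fixed filter. Given any target $\varepsilon>0$, one builds the attribute affinity graph so that $F_{ij}\neq0$ only for pairs with $\norm{\e_i-\e_j}_2\le\varepsilon$ (for instance an $\varepsilon$-threshold graph on the points $\{\e_i\}$, symmetrized and normalized to be doubly stochastic), which makes the hypothesis hold for that $\varepsilon$ by construction; the degenerate endpoint $\bm F=\bm I$, which is trivially doubly stochastic, attains $\varepsilon=0$ since then the only nonzero entries lie on the diagonal.

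I expect the core inequality to be immediate once the convex-combination rewrite is in place, so the only delicate point is making the ``arbitrarily small $\varepsilon$'' clause precise and non-vacuous. The main obstacle is therefore interpretive rather than computational: I would be careful to state it as the existence of a suitable construction---trading smaller $\varepsilon$ against sparser attribute connectivity, with the identity as the uninformative limit---rather than claiming that one $\bm F$ achieves every $\varepsilon$ at once.
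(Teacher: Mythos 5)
Your core argument is exactly the paper's: rewrite $\e_j-\widehat{\e}_j=\sum_i F_{ij}(\e_j-\e_i)$ using $\sum_i F_{ij}=1$, pull the norm inside the weighted sum, and bound each nonzero term by $F_{ij}\varepsilon$. (You correctly identify the middle step as the triangle inequality; the paper's annotation calls it ``Cauchy--Schwarz,'' which is a misnomer, but the inequality used is the same.) The only genuine divergence is in the final ``$\varepsilon$ can be arbitrarily small'' clause: the paper argues that a doubly stochastic $\bm F$ with $\sum_i F_{ij}\e_i=\e_j$ exists because $\bm F=\bm I$ solves this linear system and, the system being underdetermined (many more attributes than classes), it has infinitely many other solutions; you instead construct, for each target $\varepsilon$, an $\varepsilon$-threshold graph on the points $\{\e_i\}$ so that the hypothesis holds by design, with $\bm F=\bm I$ as the degenerate endpoint. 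Your construction is arguably the more honest reading of the theorem's quantifier structure (it exhibits a family of filters trading $\varepsilon$ against connectivity), whereas the paper's dimension-counting argument only establishes existence of nontrivial exact solutions; both are informal, and both are acceptable for what is essentially an interpretive remark rather than a sharp claim.
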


The proof for Theorem~\ref{thm:F-inter} is also given in the Appendices~\ref{ap:theorem23}. By Theorem \ref{thm:F-inter}, the conditional expectations of each attribute (i.e., class means) may change little after attribute graph convolution, and so does the inter-class variance. Combining Theorems \ref{thm:F-intra} \& \ref{thm:F-inter}, it suggests that a proper attribute affinity graph should connect attributes that have similar class means
, so as to achieve a low $\IntraVar/\InterVar$ ratio and improve performance.

\section{Unsupervised and Semi-Supervised Learning with DSGC}

\subsection{Learning Frameworks} \label{sec:frame}
Given an attributed graph with node feature matrix $\bm X$, we can learn node representations $\bm Z$ in an unsupervised manner by applying DSGC on $\bm X$, i.e.,
\begin{equation}\label{eq:separable-2}
    \bm Z=\bm G\bm X \bm F,
\end{equation}
and then perform various downstream learning tasks with the node representations $\bm Z$.

\paragraphtitle{Unsupervised Node Clustering}\quad
Any standard clustering algorithm can be applied on $\bm Z$ for clustering, as long as it is suitable for present data. In experiments, we use the popular spectral clustering method \cite{perona1998,von2007tutorial} along with linear kernel $\bm K=\bm Z\bm Z^\top$.

\paragraphtitle{Semi-supervised Node Classification}\quad
After obtaining the unsupervised node representations $\bm Z$, we may adopt any proper supervised classifier and train it with $\bm Z$ and a small portion of labels for semi-supervised classification. This two-step framework is semi-supervised in nature. In experiments, we choose a multi-layer perceptron with a single hidden layer as our classifier. In addition to the two-step framework, we can also plug DSGC into existing end-to-end graph-convolution-based methods. In experiments, we improve several popular methods, including GCN, GAT and GraphSAGE, by replacing their 1-D graph convolution with DSGC. For example, to incorporate DSGC into the vanilla GCN, we can modify the first layer propagation of GCN as:
\begin{equation}\label{GCN-F}
    \bm H^{(1)}=\sigma(\bm G\bm X\bm F\bm W^{(1)}),
\end{equation}
where $\bm H^{(1)}$ is the hidden units in the first layer, $\bm W^{(1)}\in \mathbb{R}^{m\times l}$ is the trainable parameters of GCN, and $\sigma$ is an activation function such as ReLU. 

Importantly, Eq.~(\ref{GCN-F}) can be considered as feeding a filtered feature matrix $\bm X\bm F$ instead of the raw feature matrix $\bm X$ to GCN. By our above analysis, a proper attribute graph convolutional filter $\bm F$ can reduce intra-class variance, which makes $\bm X\bm F$ much easier to classify and guarantees to help train a better model.
Furthermore, parameters of GCN is freely chosen from the parameter space $\bm  W^{(1)}$, while the model trained by Eq.~(\ref{GCN-F}) is restricted in a subspace $\bm F\bm W^{(1)}$. Since the chosen filter $\bm F$ is low-pass (Section~\ref{sec:filters}) and thus is nearly singular, $\bm F\bm W^{(1)}$ is a subspace of $\mathbb{R}^{m\times l}$ projected by $\bm F$. Model parameters in this subspace are generally better in terms of the generalization performance, due to the variance reduction property of $\bm F$. However, the model learned by Eq.~(\ref{GCN-F}) can hardly be learned by GCN, since the subspace $\bm F\bm W^{(1)}$ has measure zero, which is a tiny subset of $\mathbb{R}^{m\times l}$.

\begin{table}
	\centering
	\small
	\caption{Dataset statistics.}\label{tab:dataset_full}
	\vskip -0.1in
	\begin{tabular}{lrrrrc}
		\toprule
		Dataset & \#vertices & \#edges & \#cls & \#features & \makecell{ratio of intra-\\class edges}\\
		\midrule
		20 NG  & 18,846 & 147,034 & 20 & 11,697 &  96.8\% \\
		Wiki   & 3,767  & 129,597 & 9  & 18,316 &  38.0\% \\
		L-Cora & 11,881 & 64,898  & 10 & 3,780  &  76.5\% \\
		Corn. & 247 & 384 & 5 & 3371 & 23.7\% \\
		Texa. & 255 & 205 & 5 & 3371 & 19.8\% \\
		Wisc. & 320 & 721 & 5 & 3371 & 26.3\% \\
		Wash. & 265 & 417 & 5 & 3371 & 40.3\% \\
		\bottomrule
	\end{tabular}
\vskip -0.1in
\end{table}
\subsection{Implementation of Filters}\label{sec:filters}
\paragraphtitle{Object Graph Convolutional Filter}\quad
In most cases, the object graph $\bm A^{(1)}$ is given as part of the dataset, all we need is to design the filter. There are various graph convolutional filters available \cite{li2019label,kipf2016semi,defferrard2016convolutional,ijcai/0003LLW19}, but the key principle of filter design for semi-supervised and unsupervised learning is low-pass \cite{li2019label,nt2019revisiting}. Following this principle, we use the 2-order row-normalized affinity matrix as the object graph filter, i.e,
\begin{equation}
    \bm G = (\bm I- \bm L_{r}^{(1)})^2=(\bm D_1^{-1}\bm A^{(1)})^2.
\end{equation}

\paragraphtitle{Attribute Graph Convolutional Filter}\quad
A key issue in implementing DSGC is to construct a suitable attribute affinity graph ($\bm A^{(2)}$). Possible ways to construct attribute affinity graphs include extracting entity relation information from existing knowledge bases, building a similarity graph from features, or identifying correlations by domain knowledge. 
In experiments, we evaluate our methods on text dataset as in \cite{kipf2016semi,velickovic2017graph,hamilton2017inductive}, 
and leverage two suitable attribute affinity graphs for text data described below.

\emph{Positive point-wise mutual information} (PPMI) is a common tool for measuring the association between two words in computational linguistics \cite{church1990word}. PPMI between words $w_i$ and $w_j$ is defined by
    $\text{PPMI}(w_i,w_j)=\left[\log\frac{\Pr(w_i,w_j)}{\Pr(w_i)\Pr(w_j)}\right]_+$,
where $\Pr(w_i)$ is the probability of occurrence of word $w_i$, and $\Pr(w_i,w_j)$ is the probability of two words occurring together. If there is a semantic relation between two words, they usually tend to co-occur more frequently, and thus share a high PPMI value. 
Here, we use PPMI between words as the corresponding weights in the attribute affinity matrix $A^{(2)}$, and symmetrically normalize it as \cite{kipf2016semi}.

\emph{Word embedding based $k$-NN graphs.} Word embedding is a collection of techniques that map vocabularies to vectors in an Euclidean space. Embeddings of words are pre-trained vectors learned from corpus with algorithms such as GloVe \cite{pennington2014glove}. Since word embeddings capture semantic relations between words \cite{corr/abs-1801-09536}, they can be used for constructing an attribute affinity graph. With the embedding vectors, we can construct a $k$-NN graph with some proximity metric such as the Euclidean distance.

With the constructed attribute affinity graph ($\bm A^{(2)}$), we use one-step lazy random walk filter \cite{li2019label} in our experiments, i.e.,
\begin{equation}\label{F-emb}
    \bm F=(\bm I - \frac12 \bm L_s)=\frac12(\bm I+ \bm D_2^{-1/2}\bm A^{(2)}\bm D_2^{-1/2}).
\end{equation}

\section{Empirical Study}\label{sec:experiment}

We conduct extensive experiments in semi-supervised node classification and node clustering on seven real-world networks including 20 Newsgroups (20 NG) \cite{Lang95}, Large Cora (L-Cora) \cite{mccallumzy1999building,li2019label}, Wikispeedia (Wiki) \cite{west2009wikispeedia,west2012human}, and four subsets of WebKB (Cornell, Texas, Wisconsin, and Washington). 
\footnote{Note that we did not use the ``Cora'', ``Citeseer'' and ``PubMed'' datasets as in \cite{kipf2016semi,yang2016revisiting,sen2008collective}, since the attribute (word) lists are not provided.} Due to space limitation, on the four subsets of WebKB, we only report the results of semi-supervised node classification with some representative baselines, but the results are indicative enough.


\newcommand\stdfont{\footnotesize}

\begin{figure}[t]
   \centering
    \includegraphics[width=0.4\textwidth]{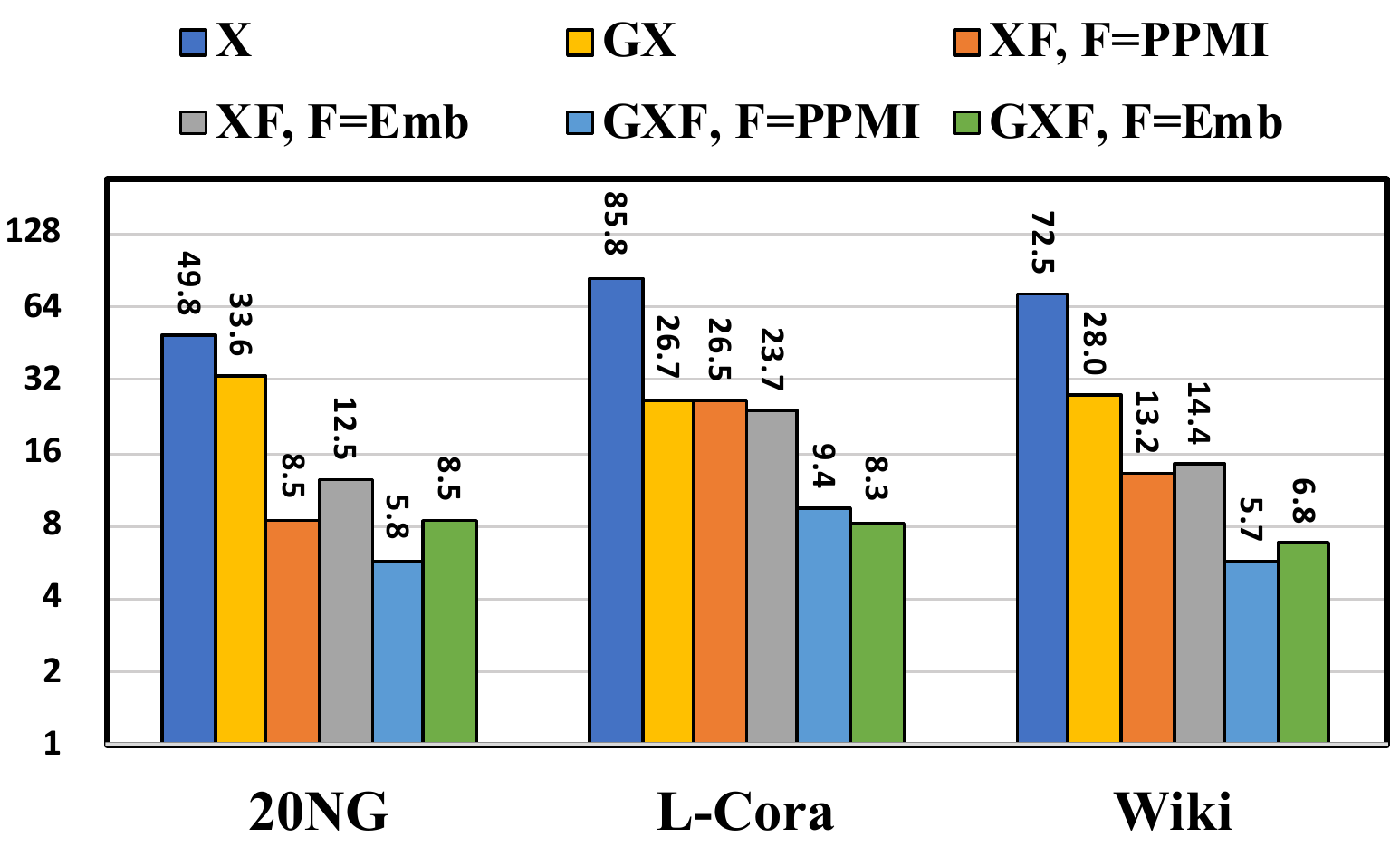}
    \vskip -0.1in
    \captionof{figure}{IntraVar/InterVar ratios. The lower, the better.}\label{fig:ratios}
    \vskip -0.1in
\end{figure}
\begin{table*}
    \centering
    \small
    \captionof{table}{Classification accuracy on 20 NG, L-Cora, and Wiki.}\label{tab:acc}
    \vskip -0.1in
    \begin{tabular}{lcc|rrrrrr}
        \toprule
              \multicolumn{3}{l}{Dataset}                         & \multicolumn{2}{c}{20 NG} & \multicolumn{2}{c}{L-Cora} & \multicolumn{2}{c}{Wiki} \\ 
              \cmidrule(r){1-3} \cmidrule(lr){4-5} \cmidrule(lr){6-7} \cmidrule(lr){8-9}
              \multicolumn{1}{l}{Method}  & $\bm G$ & \multicolumn{1}{c}{$\bm F$} & \multicolumn{1}{c}{20 labels/cls.} & \multicolumn{1}{c}{5 labels/cls.} & \multicolumn{1}{c}{20 labels/cls.} & \multicolumn{1}{c}{5 labels/cls.} & \multicolumn{1}{c}{20 labels/cls.} & \multicolumn{1}{c}{5 labels/cls.} \\
              \midrule
              \multicolumn{1}{l|}{MLP}        & \xmark & \xmark &        {63.76} {\stdfont$\pm$ 0.17} &        {38.67} {\stdfont$\pm$ 0.38} &        {52.97} {\stdfont$\pm$ 0.41} &        {39.56} {\stdfont$\pm$ 0.85} &        {67.23} {\stdfont$\pm$ 0.25} &        {54.41} {\stdfont$\pm$ 0.66} \\
              \multicolumn{1}{l|}{LP~\cite{Zhu03}}         & \cmark & \xmark &        {16.39} {\stdfont$\pm$ 0.20} &        { 8.62} {\stdfont$\pm$ 0.20} &        {55.77} {\stdfont$\pm$ 0.97} &        {38.97} {\stdfont$\pm$ 3.15} &        { 9.53} {\stdfont$\pm$ 0.05} &        {10.54} {\stdfont$\pm$ 0.19} \\
              \multicolumn{1}{l|}{GLP~\cite{li2019label}}        & \cmark & \xmark &        {74.99} {\stdfont$\pm$ 0.11} &        {52.62} {\stdfont$\pm$ 0.45} &        {68.95} {\stdfont$\pm$ 0.29} &        {56.42} {\stdfont$\pm$ 0.85} &        {60.05} {\stdfont$\pm$ 0.11} &        {48.45} {\stdfont$\pm$ 0.54} \\
              \multicolumn{1}{l|}{GCN~\cite{kipf2016semi}}        & \cmark & \xmark &        {76.25} {\stdfont$\pm$ 0.11} &        {53.78} {\stdfont$\pm$ 0.49} &        {67.75} {\stdfont$\pm$ 0.33} &        {54.27} {\stdfont$\pm$ 0.82} &        {59.81} {\stdfont$\pm$ 0.30} &        {47.93} {\stdfont$\pm$ 0.56} \\
              \multicolumn{1}{l|}{GAT~\cite{velickovic2017graph}}        & \cmark & \xmark &        {76.33} {\stdfont$\pm$ 0.16} &        {56.02} {\stdfont$\pm$ 0.57} &        {68.88} {\stdfont$\pm$ 0.78} &        {56.89} {\stdfont$\pm$ 1.53} &        {50.97} {\stdfont$\pm$ 0.54} &        {46.99} {\stdfont$\pm$ 0.83} \\
              \multicolumn{1}{l|}{DGI~\cite{velivckovic2018deep}}        & \cmark & \xmark &        {73.34} {\stdfont$\pm$ 0.27} & \textbf{66.57} {\stdfont$\pm$ 0.63} &        {61.39} {\stdfont$\pm$ 0.50} &        {54.77} {\stdfont$\pm$ 1.24} &        {49.70} {\stdfont$\pm$ 1.63} &        {43.64} {\stdfont$\pm$ 1.89} \\
              \multicolumn{1}{l|}{GraphSAGE~\cite{hamilton2017inductive}}  & \cmark & \xmark &        {65.73} {\stdfont$\pm$ 0.17} &        {42.48} {\stdfont$\pm$ 0.77} &        {57.28} {\stdfont$\pm$ 0.71} &        {46.79} {\stdfont$\pm$ 1.91} &        {65.52} {\stdfont$\pm$ 0.62} &        {48.81} {\stdfont$\pm$ 0.76} \\
              \multicolumn{1}{l|}{GCNII~\cite{icml/Chen20}}  & \cmark & \xmark &        \textbf{77.41} {\stdfont$\pm$ 0.12} &        {58.10} {\stdfont$\pm$ 0.85} &        {68.18} {\stdfont$\pm$ 0.19} &        {57.02} {\stdfont$\pm$ 0.55} &        {43.65} {\stdfont$\pm$ 1.03} &        {35.98} {\stdfont$\pm$ 4.45} \\
              \multicolumn{1}{l|}{JK-MaxPool~\cite{xu2018representation}}  & \cmark & \xmark &        {71.00} {\stdfont$\pm$ 0.19} &        {49.09} {\stdfont$\pm$ 0.27} &        {67.44} {\stdfont$\pm$ 0.18} &        {51.63} {\stdfont$\pm$ 0.52} &        {45.26} {\stdfont$\pm$ 0.37} &        {44.13} {\stdfont$\pm$ 0.38} \\
              \multicolumn{1}{l|}{JK-Concat~\cite{xu2018representation}}  & \cmark & \xmark &        {72.24} {\stdfont$\pm$ 0.13} &        {49.76} {\stdfont$\pm$ 0.27} &        {67.47} {\stdfont$\pm$ 0.19} &        {51.96} {\stdfont$\pm$ 0.46} &        {47.24} {\stdfont$\pm$ 0.30} &        {45.21} {\stdfont$\pm$ 0.29} \\
              \multicolumn{1}{l|}{GRAND~\cite{feng2020graph}}  & \cmark & \xmark &        {74.45} {\stdfont$\pm$ 0.72} & {57.97} {\stdfont$\pm$2.79}& {69.30} {\stdfont$\pm$ 0.59} & {52.12} {\stdfont$\pm$ 0.73}& {62.25} {\stdfont$\pm$ 0.93} & {47.17} {\stdfont$\pm$ 3.38}\\
              \midrule
              \multicolumn{1}{l|}{DSGC ($\bm G\bm X$)}
              								  & \cmark & \xmark &        {75.60} {\stdfont$\pm$ 0.13} &        {53.84} {\stdfont$\pm$ 0.46} &        {67.74} {\stdfont$\pm$ 0.30} &        {55.67} {\stdfont$\pm$ 0.72} &        {58.73} {\stdfont$\pm$ 0.34} &        {47.34} {\stdfont$\pm$ 0.54} \\
              \cmidrule{1-9}
              \multicolumn{1}{l|}{\multirow{2}{*}{\makecell{DSGC ($\bm X\bm F$)}}}
                                              & \xmark & Emb    &        {66.27} {\stdfont$\pm$ 0.13} &        {48.04} {\stdfont$\pm$ 0.38} &        {58.70} {\stdfont$\pm$ 0.30} &        {46.41} {\stdfont$\pm$ 0.55} & \textbf{69.76} {\stdfont$\pm$ 0.20} & \textbf{59.76} {\stdfont$\pm$ 0.58} \\
              \multicolumn{1}{l|}{}           & \xmark & PPMI   &        {75.36} {\stdfont$\pm$ 0.11} &        {59.61} {\stdfont$\pm$ 0.34} &        {61.01} {\stdfont$\pm$ 0.23} &        {48.31} {\stdfont$\pm$ 0.62} & \textbf{69.91} {\stdfont$\pm$ 0.21} & \textbf{60.13} {\stdfont$\pm$ 0.61} \\
              \cmidrule{1-9}
              \multicolumn{1}{l|}{\multirow{2}{*}{\makecell{DSGC ($\bm G\bm X\bm F$)}}}
                                              & \cmark & Emb    & {76.53} {\stdfont$\pm$ 0.15} &        {59.91} {\stdfont$\pm$ 0.31} & \textbf{69.81} {\stdfont$\pm$ 0.26} & \textbf{58.63} {\stdfont$\pm$ 0.75} &        {60.50} {\stdfont$\pm$ 0.26} &        {49.69} {\stdfont$\pm$ 0.56} \\
              \multicolumn{1}{l|}{}           & \cmark & PPMI   & \textbf{81.69} {\stdfont$\pm$ 0.12} & \textbf{68.94} {\stdfont$\pm$ 0.32} & \textbf{70.20} {\stdfont$\pm$ 0.24} & \textbf{59.43} {\stdfont$\pm$ 0.68} &        {58.84} {\stdfont$\pm$ 0.26} &        {48.51} {\stdfont$\pm$ 0.54} \\
              \bottomrule
        \multicolumn{6}{l}{$^\star$ \cmark\, and \xmark\, indicate using/not using $\bm G$ or $\bm F$.}
    \end{tabular}
\end{table*}

\begin{table}[]
    \centering
    \small
    \setlength{\tabcolsep}{2pt}
    \caption{Classification accuracy on four subsets of WebKB.}\label{tab:webkb}
    \vskip -0.1in

\begin{tabular}{l|c|rrrr}
    \toprule
    \multicolumn{1}{l|}{Method} & $\bm F$ & \multicolumn{1}{c}{Corn.} & \multicolumn{1}{c}{Texa.} & \multicolumn{1}{c}{Wisc.} & \multicolumn{1}{c}{Wash.} \\
    \midrule
    GCN~\cite{kipf2016semi}	& \xmark &  {50.25} {\stdfont$\pm$0.66}  & {60.31} {\stdfont$\pm$1.06}  & {52.74} {\stdfont$\pm$0.92}  & {53.83} {\stdfont$\pm$1.36}  \\
    GLP~\cite{li2019label}	& \xmark &  {50.86} {\stdfont$\pm$0.75}  & {59.40} {\stdfont$\pm$1.27}  & {55.28} {\stdfont$\pm$0.97}  & {55.87} {\stdfont$\pm$1.32}  \\
    GAT~\cite{velickovic2017graph}	& \xmark &  {51.21} {\stdfont$\pm$1.40}  & {60.06} {\stdfont$\pm$1.08}  & {52.92} {\stdfont$\pm$1.18}  & {56.85} {\stdfont$\pm$2.00}  \\
    GRAND~\cite{feng2020graph}	& \xmark &  {49.01} {\stdfont$\pm$1.04}  & {57.38} {\stdfont$\pm$0.26}  & {49.30} {\stdfont$\pm$0.22}  & {42.09} {\stdfont$\pm$1.25}  \\
    \midrule
    DSGC ($\bm G\bm X$)  & \xmark &  {51.22} {\stdfont$\pm$0.77}  & {59.35} {\stdfont$\pm$1.26}  & {56.26} {\stdfont$\pm$0.93}  & {55.77} {\stdfont$\pm$1.30}  \\
    \midrule
    \multirow{2}{*}{\makecell{DSGC ($\bm X \bm F$)}}	& Emb &  \textbf{62.14} {\stdfont$\pm$1.02}  & \textbf{68.00} {\stdfont$\pm$0.75}  & \textbf{73.50} {\stdfont$\pm$0.81}  & \textbf{65.78} {\stdfont$\pm$1.27}  \\
    & PPMI	&  \textbf{60.10} {\stdfont$\pm$0.91}  & \textbf{67.34} {\stdfont$\pm$0.94}  & \textbf{72.88} {\stdfont$\pm$0.78}  & \textbf{65.40} {\stdfont$\pm$1.35}  \\
    \midrule
    \multirow{2}{*}{\makecell{DSGC ($\bm G \bm X \bm F$)}}	& Emb &  {53.02} {\stdfont$\pm$0.67}  & {61.89} {\stdfont$\pm$0.94}  & {58.64} {\stdfont$\pm$1.12}  & {59.05} {\stdfont$\pm$1.21}  \\
    & PMI	&  {52.35} {\stdfont$\pm$0.52}  & {61.83} {\stdfont$\pm$0.91}  & {56.40} {\stdfont$\pm$1.04}  & {57.01} {\stdfont$\pm$1.20}  \\
\bottomrule
\end{tabular}

\vskip -0.1in
\end{table}

\subsection{Datasets}
The statistics of all datasets are summarized in Table~\ref{tab:dataset_full}, where the last row shows the intra-class edge ratio of the object link graph of each dataset, which can reflect the quality of the graph.

\paragraphtitle{20 Newsgroups} (20 NG) \cite{Lang95} is an email discussion group, where each object is an email and there are 18846 emails in total. Each email is represented by an 11697-dimension tf-idf feature vector. Two emails are connected by an edge if they replies the same one.

\paragraphtitle{Wikispeedia} (Wiki) \cite{west2009wikispeedia,west2012human} is a webpage network in which the objects are 3767 Wikipedia webpages, and the edges are web hyperlinks. Each webpage is described by a 18316-dimension tf-idf vector. We removed several tiny classes, so the webpages distribute more evenly across the remaining 9 categories.

\paragraphtitle{Large Cora} (L-Cora) \cite{mccallumzy1999building} is a citation network in which the objects are computer science research papers represented by 3780 dimension of tf-idf values. Two papers are connected by an undirected edge if and only if one cites the other. These citation links form a object graph.
After removing the papers that belong to no topic and the ones that have no authors or title, a subset of 11881 papers is obtained \cite{sacca2013collective}. 
We name this dataset ``Large Cora'' to distinguish it from the ``Cora'' dataset with 2708 papers used in \cite{kipf2016semi,yang2016revisiting,sen2008collective}.

\paragraphtitle{WebKB}
\footnote{http://www.cs.cmu.edu/afs/cs.cmu.edu/project/theo-20/www/data/}
is a webpage dataset collected by Carnegie Mellon University. We use its four subdatasets: Cornell, Texas, Wisconsin, and Washington. The objects are webpages, and the edges are hyperlinks between them. The webpages are represented by tf-idf feature vectors, and manually classified into five categories: student, project, course, staff, and faculty.

\subsection{Variance Reduction and Visualization}\quad
First of all, to verify our analysis in section~\ref{sec:analysis}, we illustrate the variance reduction effect of both object graph convolution and attribute graph convolution. As shown in Figure~\ref{fig:ratios}, 1-D graph convolution ($\bm {GX}$ or $\bm {XF}$) already greatly reduces the $\IntraVar/\InterVar$ ratio, and 2-D graph convolution ($\bm {GXF}$) reduces it even further.

In Figure~\ref{fig:visualization}, we visualize the results of performing graph convolution on the object features of 20 NG by t-SNE. It can be seen that both object graph convolution and attribute graph convolution can successfully reduce the overlap among classes, and 2-D graph convolution (DSGC) is more effective than 1-D.

\subsection{Semi-supervised Node Classification}

\begin{table}
	\centering
	\small
	\captionof{table}{Baselines improved by DSGC.}\label{tab:baselinewithf}
    \vskip -0.1in
	\begin{tabular}{l|c|lll}
		\toprule
		Methods & $\bm F$ & \multicolumn{1}{c}{20 NG} & \multicolumn{1}{c}{L-Cora} & \multicolumn{1}{c}{Wiki} \\
		\midrule
		\multirow{2}{*}{GAT~\cite{velickovic2017graph}}    & \xmark & 76.33 {\stdfont$\pm$ 0.16} & 68.88 {\stdfont$\pm$ 0.78} & 50.97 {\stdfont$\pm$ 0.54} \\
                        		& PPMI   & 78.01 {\stdfont$\pm$ 0.30} & 67.38 {\stdfont$\pm$ 0.65} & 55.43 {\stdfont$\pm$ 0.51} \\
		\cmidrule{1-5}
		\multirow{2}{*}{GCN~\cite{kipf2016semi}}    & \xmark & 76.25 {\stdfont$\pm$ 0.11} & 67.75 {\stdfont$\pm$ 0.33} & 59.81 {\stdfont$\pm$ 0.30} \\
                        		& PPMI   & 81.60 {\stdfont$\pm$ 0.10} & 67.87 {\stdfont$\pm$ 0.25} & 61.33 {\stdfont$\pm$ 0.28} \\
		\cmidrule{1-5}
		\multirow{2}{*}{GraphSAGE~\cite{hamilton2017inductive}}  & \xmark & 65.73 {\stdfont$\pm$ 0.17} & 57.28 {\stdfont$\pm$ 0.71} & 65.52 {\stdfont$\pm$ 0.62} \\
                        		& PPMI   & 76.27 {\stdfont$\pm$ 0.33} & 60.23 {\stdfont$\pm$ 1.81} & 67.26 {\stdfont$\pm$ 0.52} \\
		\bottomrule
		\multicolumn{5}{l}{$^\star$ \xmark\, indicates not using $\bm F$.}
	\end{tabular}
\vskip -0.15in
\end{table}

\paragraphtitle{Baselines}\quad
We compare DSGC with the following baselines: label propagation (LP) \cite{Wu12parw}, multi-layer perceptron (MLP), graph convolutional networks (GCN) \cite{kipf2016semi}, generalized label propagation (GLP) \cite{li2019label}, GraphSAGE \cite{hamilton2017inductive}, graph attention networks (GAT) \cite{velickovic2017graph}, deep graph infomax (DGI) \cite{velivckovic2018deep}, GCNII \cite{icml/Chen20}, jumping knowledge networks (JK-MaxPool \& JK-Concat) \cite{xu2018representation}, and GRAND\cite{feng2020graph}. We also try to improve GCN, GAT, and GraphSAGE by DSGC as described in Section \ref{sec:frame}. Our methods with mere object graph filter ($\bm G$) or mere attribute graph filter ($\bm F$) are also tested, for the purpose of ablation study. PPMI and Emb denote attribute affinity graphs constructed by positive point-wise mutual information and word embedding respectively as described in Section \ref{sec:filters}).

\begin{table*}
    \centering
    \caption{Clustering performance.}\label{tab:clustering}
    \vskip -0.1in
    \small
    \begin{tabular}{lcc|cccccccc}
        \toprule
        \multicolumn{3}{l}{Dataset} & \multicolumn{2}{c}{20 NG} & \multicolumn{2}{c}{L-Cora} & \multicolumn{2}{c}{Wiki}\\
        \cmidrule(lr){1-3}\cmidrule(lr){4-5} \cmidrule(lr){6-7} \cmidrule(lr){8-9}
        Method                  & $\bm G$ & \multicolumn{1}{c}{$\bm F$} & {Acc(\%)} & {NMI(\%)} & {Acc(\%)} & {NMI(\%)} & {Acc(\%)} & {NMI(\%)}\\
        \midrule
        \multicolumn{1}{l|}{Spectral} & \xmark  & \xmark & 25.29 {\footnotesize $\pm$ 1.01} & 28.18 {\footnotesize $\pm$ 0.74} & 28.22 {\footnotesize $\pm$ 1.01} & 11.61 {\footnotesize $\pm$ 0.04} & 29.25 {\footnotesize $\pm$ 0.00} & 21.83 {\footnotesize $\pm$ 0.00}\\
        \multicolumn{1}{l|}{GAE~\cite{kipf2016variational}}  & \cmark & \xmark& 38.92 {\footnotesize $\pm$ 1.39} & 44.58 {\footnotesize $\pm$ 0.40} & 34.45 {\footnotesize $\pm$ 0.76}& 22.38 {\footnotesize $\pm$ 0.18} & 33.78 {\footnotesize $\pm$ 0.32}& 22.88 {\footnotesize $\pm$ 0.20}\\
        \multicolumn{1}{l|}{VGAE~\cite{kipf2016variational}}  & \cmark & \xmark & 25.04 {\footnotesize $\pm$ 0.81} & 25.72 {\footnotesize $\pm$ 0.77} & 29.45 {\footnotesize $\pm$ 1.25}& 17.53 {\footnotesize $\pm$ 0.15} & 33.83 {\footnotesize $\pm$ 0.45} & 21.46 {\footnotesize $\pm$ 0.19}\\
        \multicolumn{1}{l|}{MGAE~\cite{wang2017mgae}} & \cmark & \xmark& 47.83 {\footnotesize $\pm$ 2.33}& \textbf{56.14} {\footnotesize $\pm$ 1.00} & 35.87 {\footnotesize $\pm$ 0.97}& 30.57 {\footnotesize $\pm$ 0.98} & 32.73 {\footnotesize $\pm$ 1.16}& 27.95 {\footnotesize $\pm$ 2.29}\\
        \multicolumn{1}{l|}{ARGE~\cite{pan2018adversarially}} & \cmark  & \xmark& 42.04 {\footnotesize $\pm$ 0.50}& 44.13 {\footnotesize $\pm$ 0.91} & 36.07 {\footnotesize $\pm$ 0.05} & 27.74 {\footnotesize $\pm$ 0.01} & 26.49 {\footnotesize $\pm$ 0.10} & 17.17 {\footnotesize $\pm$ 0.05}\\
        \multicolumn{1}{l|}{ARVGE~\cite{pan2018adversarially}} & \cmark  & \xmark & 21.10 {\footnotesize $\pm$ 0.61} & 21.79 {\footnotesize $\pm$ 0.49} & 26.45 {\footnotesize $\pm$ 0.03} & 12.94 {\footnotesize $\pm$ 0.01}& 33.82 {\footnotesize $\pm$ 0.13} & 21.42 {\footnotesize $\pm$ 0.11}\\
        \multicolumn{1}{l|}{AGC~\cite{ijcai/0003LLW19}} & \cmark  & \xmark & 38.83 {\footnotesize $\pm$ 0.84} & 47.08 {\footnotesize $\pm$ 1.57} & \textbf{41.76} {\footnotesize $\pm$ 0.01} & \textbf{33.65} {\footnotesize $\pm$ 0.01} & 32.74 {\footnotesize $\pm$ 0.01} & 24.90 {\footnotesize $\pm$ 0.01}\\
        \midrule
        \multicolumn{1}{l|}{DSGC ($\bm G \bm X$)}
        & \cmark  & \xmark   & 38.42 {\footnotesize $\pm$ 0.66} & 46.28 {\footnotesize $\pm$ 0.93}& 38.26 {\footnotesize $\pm$ 0.02} & 30.66 {\footnotesize $\pm$ 0.02} & 31.43 {\footnotesize $\pm$ 0.09} & 24.16 {\footnotesize $\pm$ 0.18}\\
        \cmidrule{1-9}
        \multicolumn{1}{l|}{\multirow{2}{*}{\makecell{DSGC ($\bm X \bm F$)}}}   & \xmark & Emb& 28.99 {\footnotesize $\pm$ 0.06}& 33.22 {\footnotesize $\pm$ 0.10} & 30.80 {\footnotesize $\pm$ 0.56}& 17.46 {\footnotesize $\pm$ 0.21} & \textbf{35.45} {\footnotesize $\pm$ 0.91} & \textbf{33.44} {\footnotesize $\pm$ 0.66}\\
        \multicolumn{1}{l|}{}   & \xmark   & PPMI   & \textbf{48.36} {\footnotesize $\pm$ 2.40} & 53.27 {\footnotesize $\pm$ 2.17} & 36.46 {\footnotesize $\pm$ 0.06}& 22.53 {\footnotesize $\pm$ 0.03} & \textbf{38.10} {\footnotesize $\pm$ 0.01} & \textbf{36.07} {\footnotesize $\pm$ 0.02}\\
        \cmidrule{1-9}
        \multicolumn{1}{l|}{\multirow{2}{*}{\makecell{DSGC ($\bm G \bm X \bm F$)}}}  & \cmark & Emb   & 43.40 {\footnotesize $\pm$ 0.66} & 50.97 {\footnotesize $\pm$ 0.58} & 40.75 {\footnotesize $\pm$ 0.02} & \textbf{33.05} {\footnotesize $\pm$ 0.04} & 30.50 {\footnotesize $\pm$ 0.01}& 25.48 {\footnotesize $\pm$ 0.03}\\
        \multicolumn{1}{l|}{}   & \cmark  & PPMI& \textbf{52.25} {\footnotesize $\pm$ 1.97} & \textbf{61.34} {\footnotesize $\pm$ 1.07}& \textbf{41.24} {\footnotesize $\pm$ 0.04} & 30.92 {\footnotesize $\pm$ 0.01} & 31.37 {\footnotesize $\pm$ 0.08}& 26.06 {\footnotesize $\pm$ 0.20}\\
        \bottomrule
        \multicolumn{9}{l}{$^\star$ \cmark\, and \xmark\, indicate using/not using $\bm G$ or $\bm F$.}
    \end{tabular}
    \vskip -0.1in
\end{table*}

\paragraphtitle{Settings}\quad
 For 20 NG, L-Cora and Wiki, we test two scenarios -- 20 labels/class and 5 labels/class. We follow GCN \cite{kipf2016semi} and many others to set aside a validation set containing 500 samples for hyper-parameter tuning. For the four small subdatasets of WebKB, we randomly split them into 5/30/65\% as train/valid/test set, and ensure each class has at least 1 label. Hyper-parameters of all methods, including ours and baselines, are tuned by grid search according to validation. The reported results of all methods are averaged over 50 runs. More experimental details are provided in Appendix \ref{ap:experiment}. 
 
\paragraphtitle{Performance}\quad
Classification accuracies are summarized in Tables~\ref{tab:acc} and ~\ref{tab:webkb}, and the top 2 accuracies are highlighted. Results of improved GAT, GCN, and GraphSAGE are shown in Table~\ref{tab:baselinewithf}. The following observations can be made.
Firstly, object graph convolution ($\bm G$) does not always help. On 20 NG and L-Cora, methods based on it like DSGC ($\bm {GX}$), GCN and GAT all outperform MLP significantly. However, on Wiki and the four subsets of WebKB, object graph convolution severely harms the performance. This is because the hyperlink graphs are highly noisy. Only a small portion of edges connect nodes of the same class, much lower than that of 20 NG (96.8\%) and L-Cora (76.5\%) (see Table~\ref{tab:dataset_full}). This shows the limitation of object graph convolution.
Secondly, attribute graph convolution works. As shown in Table~\ref{tab:acc}, DSGC with mere $\bm F$ already outperforms MLP significantly. Especially, on Wiki and WebKB, object graph convolution fails while attribute graph convolution is still effective.
Thirdly, 2-D graph convolution is useful. On datasets with good object link graphs like 20 NG and L-Cora, DSGC with both $\bm G$ and $\bm F$ performs much better than with either one of them only. 
Especially, DSGC ($\bm G \bm X \bm F$) with PPMI achieves the best performance among all methods.
On datasets with bad object link graphs such as Wiki and the four subsets of WebKB, DSGC with both $\bm G$ and $\bm F$ improves upon DSGC with mere $\bm G$ and outperforms most baselines. Especially, DSGC with mere $\bm F$ achieves the best performance, which improves upon the best baseline by 3.63\% and 5.72\% in absolute accuracy in two scenarios.

Remarkably, it can be seen from Table~\ref{tab:baselinewithf} that by incorporating DSGC, the performance of baselines including GCN, GAT, GraphSAGE is improved substantially in most cases, which again confirms that attribute graph convolution is a useful complement to object graph convolution.



\subsection{Node Clustering}

\paragraphtitle{Baselines}\quad
We test the proposed node clustering method with DSGC (Section~\ref{sec:frame}) with or without $\bm G$ and $\bm F$ in five cases, and compare them with existing strong baselines including GAE and VGAE \cite{kipf2016variational}, MGAE \cite{wang2017mgae}, ARGE and ARVGE \cite{pan2018adversarially} and AGC \cite{ijcai/0003LLW19}. We also compare with the spectral clustering (Spectral) method that operates on a similarity graph constructed by a linear kernel. Detailed experiment settings are included in Appendix \ref{ap:experiment}.

\paragraphtitle{Performance}\quad
We adopt two widely-used clustering measures \cite{Aggarwal14}: clustering accuracy (Acc) and normalized mutual information (NMI), and the results are shown in Table~\ref{tab:clustering} with the top 2 results highlighted. We can make the following observations. 1) Attribute graph convolution is highly effective. On 20 NG, DSGC ($\bm X \bm F$) with PPMI outperforms most baselines by a very large margin. On Wiki, DSGC ($\bm X \bm F$) with PPMI or Emb significantly outperforms all the baselines. 2) 2-D graph convolution is beneficial as validated in the classification experiments. On 20 NG, DSGC ($\bm {GXF}$) with PPMI can further improve upon the already very strong performance of DSGC ($\bm X \bm F$) with PPMI and performs the best; On L-Cora, DSGC ($\bm {GXF}$) with PPMI or Emb improves upon either DSGC ($\bm G \bm X$)  or DSGC ($\bm X \bm F$) and outperforms most baselines significantly. On Wiki, DSGC ($\bm X \bm F$) performs better than DSGC ($\bm {GXF}$), due to the low-quality object link graph as explained above.



\section{Conclusion}

We have proposed a simple and efficient dimensionwise separable 2-D graph convolution (DSGC) for unsupervised and semi-supervised learning on graphs. We have demonstrated theoretically and empirically that by exploiting attribute relations in addition to object relations, DSGC can learn better node representations than existing methods based on the regular 1-D graph convolution, leading to promising performance on node classification and clustering tasks. We believe DSGC can be applied to a wide variety of applications such as action recognition, malware detection, and recommender systems. In future work, we plan to apply DSGC to solve more practical problems.

\begin{acks}
We would like to thank the anonymous reviewers for their insightful comments. This research was supported by the General Research Fund No.15222220 funded by the UGC of Hong Kong.
\end{acks}


\bibliographystyle{ACM-Reference-Format}
\bibliography{parw}


\begin{thebibliography}{65}


\ifx \showCODEN    \undefined \def \showCODEN     #1{\unskip}     \fi
\ifx \showDOI      \undefined \def \showDOI       #1{#1}\fi
\ifx \showISBNx    \undefined \def \showISBNx     #1{\unskip}     \fi
\ifx \showISBNxiii \undefined \def \showISBNxiii  #1{\unskip}     \fi
\ifx \showISSN     \undefined \def \showISSN      #1{\unskip}     \fi
\ifx \showLCCN     \undefined \def \showLCCN      #1{\unskip}     \fi
\ifx \shownote     \undefined \def \shownote      #1{#1}          \fi
\ifx \showarticletitle \undefined \def \showarticletitle #1{#1}   \fi
\ifx \showURL      \undefined \def \showURL       {\relax}        \fi
\providecommand\bibfield[2]{#2}
\providecommand\bibinfo[2]{#2}
\providecommand\natexlab[1]{#1}
\providecommand\showeprint[2][]{arXiv:#2}

\bibitem[\protect\citeauthoryear{Abu{-}El{-}Haija, Perozzi, Kapoor,
  Alipourfard, Lerman, Harutyunyan, Steeg, and Galstyan}{Abu{-}El{-}Haija
  et~al\mbox{.}}{2019}]%
        {icml/Abu-El-HaijaPKA19}
\bibfield{author}{\bibinfo{person}{Sami Abu{-}El{-}Haija},
  \bibinfo{person}{Bryan Perozzi}, \bibinfo{person}{Amol Kapoor},
  \bibinfo{person}{Nazanin Alipourfard}, \bibinfo{person}{Kristina Lerman},
  \bibinfo{person}{Hrayr Harutyunyan}, \bibinfo{person}{Greg~Ver Steeg}, {and}
  \bibinfo{person}{Aram Galstyan}.} \bibinfo{year}{2019}\natexlab{}.
\newblock \showarticletitle{MixHop: Higher-Order Graph Convolutional
  Architectures via Sparsified Neighborhood Mixing}. In
  \bibinfo{booktitle}{\emph{{ICML}}}. \bibinfo{pages}{21--29}.
\newblock


\bibitem[\protect\citeauthoryear{Aggarwal and Reddy}{Aggarwal and
  Reddy}{2014}]%
        {Aggarwal14}
\bibfield{author}{\bibinfo{person}{Charu~C Aggarwal} {and}
  \bibinfo{person}{Chandan~K Reddy}.} \bibinfo{year}{2014}\natexlab{}.
\newblock \bibinfo{booktitle}{\emph{Data Clustering: Algorithms and
  Applications}}.
\newblock \bibinfo{publisher}{CRC Press}, \bibinfo{address}{Boca Raton}.
\newblock


\bibitem[\protect\citeauthoryear{Albert and Barab{\'a}si}{Albert and
  Barab{\'a}si}{2002}]%
        {albert2002statistical}
\bibfield{author}{\bibinfo{person}{R{\'e}ka Albert} {and}
  \bibinfo{person}{Albert-L{\'a}szl{\'o} Barab{\'a}si}.}
  \bibinfo{year}{2002}\natexlab{}.
\newblock \showarticletitle{Statistical mechanics of complex networks}.
\newblock \bibinfo{journal}{\emph{Reviews of modern physics}}
  \bibinfo{volume}{74}, \bibinfo{number}{1} (\bibinfo{year}{2002}),
  \bibinfo{pages}{47}.
\newblock


\bibitem[\protect\citeauthoryear{Bakarov}{Bakarov}{2018}]%
        {corr/abs-1801-09536}
\bibfield{author}{\bibinfo{person}{Amir Bakarov}.}
  \bibinfo{year}{2018}\natexlab{}.
\newblock \showarticletitle{A Survey of Word Embeddings Evaluation Methods}.
\newblock \bibinfo{journal}{\emph{CoRR}}  \bibinfo{volume}{abs/1801.09536}
  (\bibinfo{year}{2018}).
\newblock


\bibitem[\protect\citeauthoryear{Belkin, Niyogi, and Sindhwani}{Belkin
  et~al\mbox{.}}{2006}]%
        {belkin2006manifold}
\bibfield{author}{\bibinfo{person}{M. Belkin}, \bibinfo{person}{P. Niyogi},
  {and} \bibinfo{person}{V Sindhwani}.} \bibinfo{year}{2006}\natexlab{}.
\newblock \showarticletitle{{Manifold regularization: A geometric framework for
  learning from labeled and unlabeled examples}}.
\newblock \bibinfo{journal}{\emph{Journal of Machine Learning Research}}
  \bibinfo{volume}{7}, \bibinfo{number}{1} (\bibinfo{year}{2006}),
  \bibinfo{pages}{2399--2434}.
\newblock


\bibitem[\protect\citeauthoryear{Blum, Lafferty, Rwebangira, and Reddy}{Blum
  et~al\mbox{.}}{2004}]%
        {blum2004semi}
\bibfield{author}{\bibinfo{person}{A. Blum}, \bibinfo{person}{J. Lafferty},
  \bibinfo{person}{M.R. Rwebangira}, {and} \bibinfo{person}{R Reddy}.}
  \bibinfo{year}{2004}\natexlab{}.
\newblock \showarticletitle{{Semi-supervised learning using randomized
  mincuts}}. In \bibinfo{booktitle}{\emph{ICML}}. \bibinfo{pages}{13}.
\newblock


\bibitem[\protect\citeauthoryear{Cao, Lu, and Xu}{Cao et~al\mbox{.}}{2016}]%
        {cao2016deep}
\bibfield{author}{\bibinfo{person}{Shaosheng Cao}, \bibinfo{person}{Wei Lu},
  {and} \bibinfo{person}{Qiongkai Xu}.} \bibinfo{year}{2016}\natexlab{}.
\newblock \showarticletitle{Deep Neural Networks for Learning Graph
  Representations}. In \bibinfo{booktitle}{\emph{AAAI}}.
  \bibinfo{pages}{1145--1152}.
\newblock


\bibitem[\protect\citeauthoryear{Chapelle, Weston, and Scholkopf}{Chapelle
  et~al\mbox{.}}{2003}]%
        {chapelle03}
\bibfield{author}{\bibinfo{person}{O. Chapelle}, \bibinfo{person}{J. Weston},
  {and} \bibinfo{person}{B Scholkopf}.} \bibinfo{year}{2003}\natexlab{}.
\newblock \showarticletitle{{Cluster kernels for semi-supervised learning}}. In
  \bibinfo{booktitle}{\emph{NeurIPS}}. \bibinfo{pages}{601--608}.
\newblock


\bibitem[\protect\citeauthoryear{Chapelle and Zien}{Chapelle and Zien}{2005}]%
        {chapelle2005semi}
\bibfield{author}{\bibinfo{person}{O. Chapelle} {and} \bibinfo{person}{A
  Zien}.} \bibinfo{year}{2005}\natexlab{}.
\newblock \showarticletitle{{Semi-supervised classification by low density
  separation}}. In \bibinfo{booktitle}{\emph{International Workshop on
  Artificial Intelligence and Statistics}}. \bibinfo{pages}{57--64}.
\newblock


\bibitem[\protect\citeauthoryear{Chen, Wei, Huang, Ding, and Li}{Chen
  et~al\mbox{.}}{2020}]%
        {icml/Chen20}
\bibfield{author}{\bibinfo{person}{Ming Chen}, \bibinfo{person}{Zhewei Wei},
  \bibinfo{person}{Zengfeng Huang}, \bibinfo{person}{Bolin Ding}, {and}
  \bibinfo{person}{Yaliang Li}.} \bibinfo{year}{2020}\natexlab{}.
\newblock \showarticletitle{Simple and Deep Graph Convolutional Networks}. In
  \bibinfo{booktitle}{\emph{ICML}}.
\newblock


\bibitem[\protect\citeauthoryear{Church and Hanks}{Church and Hanks}{1990}]%
        {church1990word}
\bibfield{author}{\bibinfo{person}{Kenneth~Ward Church} {and}
  \bibinfo{person}{Patrick Hanks}.} \bibinfo{year}{1990}\natexlab{}.
\newblock \showarticletitle{Word association norms, mutual information, and
  lexicography}.
\newblock \bibinfo{journal}{\emph{Computational Linguistics}}
  \bibinfo{volume}{16}, \bibinfo{number}{1} (\bibinfo{year}{1990}),
  \bibinfo{pages}{22--29}.
\newblock


\bibitem[\protect\citeauthoryear{Dai, Li, Tang, and Wang}{Dai
  et~al\mbox{.}}{2018}]%
        {dai2018adversarial}
\bibfield{author}{\bibinfo{person}{Quanyu Dai}, \bibinfo{person}{Qiang Li},
  \bibinfo{person}{Jian Tang}, {and} \bibinfo{person}{Dan Wang}.}
  \bibinfo{year}{2018}\natexlab{}.
\newblock \showarticletitle{Adversarial network embedding}. In
  \bibinfo{booktitle}{\emph{AAAI}}.
\newblock


\bibitem[\protect\citeauthoryear{Defferrard, Bresson, and
  Vandergheynst}{Defferrard et~al\mbox{.}}{2016}]%
        {defferrard2016convolutional}
\bibfield{author}{\bibinfo{person}{Micha{\"e}l Defferrard},
  \bibinfo{person}{Xavier Bresson}, {and} \bibinfo{person}{Pierre
  Vandergheynst}.} \bibinfo{year}{2016}\natexlab{}.
\newblock \showarticletitle{Convolutional neural networks on graphs with fast
  localized spectral filtering}. In \bibinfo{booktitle}{\emph{NeurIPS}}.
  \bibinfo{pages}{3844--3852}.
\newblock


\bibitem[\protect\citeauthoryear{Ekambaram, Fanti, Ayazifar, and
  Ramchandran}{Ekambaram et~al\mbox{.}}{2013}]%
        {ekambaram2013wavelet}
\bibfield{author}{\bibinfo{person}{Venkatesan~N Ekambaram},
  \bibinfo{person}{Giulia Fanti}, \bibinfo{person}{Babak Ayazifar}, {and}
  \bibinfo{person}{Kannan Ramchandran}.} \bibinfo{year}{2013}\natexlab{}.
\newblock \showarticletitle{Wavelet-regularized graph semi-supervised
  learning}. In \bibinfo{booktitle}{\emph{Global Conference on Signal and
  Information Processing}}. \bibinfo{pages}{423--426}.
\newblock


\bibitem[\protect\citeauthoryear{Feng, Zhang, Dong, Han, Luan, Xu, Yang,
  Kharlamov, and Tang}{Feng et~al\mbox{.}}{2020}]%
        {feng2020graph}
\bibfield{author}{\bibinfo{person}{Wenzheng Feng}, \bibinfo{person}{Jie Zhang},
  \bibinfo{person}{Yuxiao Dong}, \bibinfo{person}{Yu Han},
  \bibinfo{person}{Huanbo Luan}, \bibinfo{person}{Qian Xu},
  \bibinfo{person}{Qiang Yang}, \bibinfo{person}{Evgeny Kharlamov}, {and}
  \bibinfo{person}{Jie Tang}.} \bibinfo{year}{2020}\natexlab{}.
\newblock \showarticletitle{Graph Random Neural Networks for Semi-Supervised
  Learning on Graphs}.
\newblock \bibinfo{journal}{\emph{Advances in Neural Information Processing
  Systems}}  \bibinfo{volume}{33} (\bibinfo{year}{2020}).
\newblock


\bibitem[\protect\citeauthoryear{Fukunaga}{Fukunaga}{2013}]%
        {fukunaga2013introduction}
\bibfield{author}{\bibinfo{person}{Keinosuke Fukunaga}.}
  \bibinfo{year}{2013}\natexlab{}.
\newblock \bibinfo{booktitle}{\emph{Introduction to statistical pattern
  recognition}}.
\newblock \bibinfo{publisher}{Elsevier}.
\newblock


\bibitem[\protect\citeauthoryear{Girault, Gon{\c{c}}alves, Fleury, and
  Mor}{Girault et~al\mbox{.}}{2014}]%
        {girault2014semi}
\bibfield{author}{\bibinfo{person}{Benjamin Girault}, \bibinfo{person}{Paulo
  Gon{\c{c}}alves}, \bibinfo{person}{Eric Fleury}, {and}
  \bibinfo{person}{Arashpreet~Singh Mor}.} \bibinfo{year}{2014}\natexlab{}.
\newblock \showarticletitle{Semi-supervised learning for graph to signal
  mapping: A graph signal wiener filter interpretation}. In
  \bibinfo{booktitle}{\emph{Conference on Acoustics, Speech and Signal
  Processing}}. \bibinfo{pages}{1115--1119}.
\newblock


\bibitem[\protect\citeauthoryear{Grinstead and Snell}{Grinstead and
  Snell}{2012}]%
        {grinstead2012introduction}
\bibfield{author}{\bibinfo{person}{Charles~M Grinstead} {and}
  \bibinfo{person}{James~Laurie Snell}.} \bibinfo{year}{2012}\natexlab{}.
\newblock \bibinfo{booktitle}{\emph{Introduction to probability}}.
\newblock \bibinfo{publisher}{American Mathematical Soc}.
\newblock


\bibitem[\protect\citeauthoryear{Grover and Leskovec}{Grover and
  Leskovec}{2016}]%
        {grover2016node2vec}
\bibfield{author}{\bibinfo{person}{Aditya Grover} {and} \bibinfo{person}{Jure
  Leskovec}.} \bibinfo{year}{2016}\natexlab{}.
\newblock \showarticletitle{node2vec: Scalable feature learning for networks}.
  In \bibinfo{booktitle}{\emph{ACM SIGKDD}}. ACM, \bibinfo{pages}{855--864}.
\newblock


\bibitem[\protect\citeauthoryear{Hamilton, Ying, and Leskovec}{Hamilton
  et~al\mbox{.}}{2017}]%
        {hamilton2017inductive}
\bibfield{author}{\bibinfo{person}{Will Hamilton}, \bibinfo{person}{Zhitao
  Ying}, {and} \bibinfo{person}{Jure Leskovec}.}
  \bibinfo{year}{2017}\natexlab{}.
\newblock \showarticletitle{Inductive representation learning on large graphs}.
  In \bibinfo{booktitle}{\emph{NeurIPS}}. \bibinfo{pages}{1024--1034}.
\newblock


\bibitem[\protect\citeauthoryear{Hein and Maier}{Hein and Maier}{2007}]%
        {Hein07}
\bibfield{author}{\bibinfo{person}{Matthias Hein} {and} \bibinfo{person}{Markus
  Maier}.} \bibinfo{year}{2007}\natexlab{}.
\newblock \showarticletitle{Manifold Denoising}. In
  \bibinfo{booktitle}{\emph{NeurIPS}}. \bibinfo{pages}{561--568}.
\newblock


\bibitem[\protect\citeauthoryear{Howard, Zhu, Chen, Kalenichenko, Wang, Weyand,
  Andreetto, and Adam}{Howard et~al\mbox{.}}{2017}]%
        {howard2017mobilenets}
\bibfield{author}{\bibinfo{person}{Andrew~G Howard}, \bibinfo{person}{Menglong
  Zhu}, \bibinfo{person}{Bo Chen}, \bibinfo{person}{Dmitry Kalenichenko},
  \bibinfo{person}{Weijun Wang}, \bibinfo{person}{Tobias Weyand},
  \bibinfo{person}{Marco Andreetto}, {and} \bibinfo{person}{Hartwig Adam}.}
  \bibinfo{year}{2017}\natexlab{}.
\newblock \showarticletitle{Mobilenets: Efficient convolutional neural networks
  for mobile vision applications}.
\newblock \bibinfo{journal}{\emph{arXiv preprint arXiv:1704.04861}}
  (\bibinfo{year}{2017}).
\newblock


\bibitem[\protect\citeauthoryear{Huang, Li, and Hu}{Huang
  et~al\mbox{.}}{2017}]%
        {huang2017accelerated}
\bibfield{author}{\bibinfo{person}{Xiao Huang}, \bibinfo{person}{Jundong Li},
  {and} \bibinfo{person}{Xia Hu}.} \bibinfo{year}{2017}\natexlab{}.
\newblock \showarticletitle{Accelerated attributed network embedding}. In
  \bibinfo{booktitle}{\emph{SIAM ICDM}}. \bibinfo{pages}{633--641}.
\newblock


\bibitem[\protect\citeauthoryear{Kipf and Welling}{Kipf and Welling}{2016}]%
        {kipf2016variational}
\bibfield{author}{\bibinfo{person}{Thomas~N Kipf} {and} \bibinfo{person}{Max
  Welling}.} \bibinfo{year}{2016}\natexlab{}.
\newblock \showarticletitle{Variational Graph Auto-Encoders}.
\newblock \bibinfo{journal}{\emph{NIPS Workshop on Bayesian Deep Learning}}
  (\bibinfo{year}{2016}).
\newblock


\bibitem[\protect\citeauthoryear{Kipf and Welling}{Kipf and Welling}{2017}]%
        {kipf2016semi}
\bibfield{author}{\bibinfo{person}{Thomas~N Kipf} {and} \bibinfo{person}{Max
  Welling}.} \bibinfo{year}{2017}\natexlab{}.
\newblock \showarticletitle{Semi-supervised classification with graph
  convolutional networks}. In \bibinfo{booktitle}{\emph{ICLR}}.
\newblock


\bibitem[\protect\citeauthoryear{Kurokawa, Oki, and Nagao}{Kurokawa
  et~al\mbox{.}}{2017}]%
        {kurokawa2017multi}
\bibfield{author}{\bibinfo{person}{Takashi Kurokawa}, \bibinfo{person}{Taihei
  Oki}, {and} \bibinfo{person}{Hiromichi Nagao}.}
  \bibinfo{year}{2017}\natexlab{}.
\newblock \showarticletitle{Multi-dimensional Graph Fourier Transform}.
\newblock \bibinfo{journal}{\emph{arXiv preprint arXiv:1712.07811}}
  (\bibinfo{year}{2017}).
\newblock


\bibitem[\protect\citeauthoryear{Lang}{Lang}{1995}]%
        {Lang95}
\bibfield{author}{\bibinfo{person}{Ken Lang}.} \bibinfo{year}{1995}\natexlab{}.
\newblock \showarticletitle{Newsweeder: Learning to filter netnews}. In
  \bibinfo{booktitle}{\emph{ICML}}. \bibinfo{pages}{331--339}.
\newblock


\bibitem[\protect\citeauthoryear{Li, Han, and Wu}{Li et~al\mbox{.}}{2018}]%
        {li2018deeper}
\bibfield{author}{\bibinfo{person}{Qimai Li}, \bibinfo{person}{Zhichao Han},
  {and} \bibinfo{person}{Xiao-Ming Wu}.} \bibinfo{year}{2018}\natexlab{}.
\newblock \showarticletitle{Deeper Insights into Graph Convolutional Networks
  for Semi-Supervised Learning}. In \bibinfo{booktitle}{\emph{AAAI}}.
  \bibinfo{pages}{3538--3545}.
\newblock


\bibitem[\protect\citeauthoryear{Li, Wu, Liu, Zhang, and Guan}{Li
  et~al\mbox{.}}{2019}]%
        {li2019label}
\bibfield{author}{\bibinfo{person}{Qimai Li}, \bibinfo{person}{Xiao-Ming Wu},
  \bibinfo{person}{Han Liu}, \bibinfo{person}{Xiaotong Zhang}, {and}
  \bibinfo{person}{Zhichao Guan}.} \bibinfo{year}{2019}\natexlab{}.
\newblock \showarticletitle{Label Efficient Semi-Supervised Learning via Graph
  Filtering}. In \bibinfo{booktitle}{\emph{CVPR}}. \bibinfo{pages}{9582--9591}.
\newblock


\bibitem[\protect\citeauthoryear{Li, Tarlow, Brockschmidt, and Zemel}{Li
  et~al\mbox{.}}{2016}]%
        {LiTBZ15}
\bibfield{author}{\bibinfo{person}{Yujia Li}, \bibinfo{person}{Daniel Tarlow},
  \bibinfo{person}{Marc Brockschmidt}, {and} \bibinfo{person}{Richard~S.
  Zemel}.} \bibinfo{year}{2016}\natexlab{}.
\newblock \showarticletitle{Gated Graph Sequence Neural Networks}. In
  \bibinfo{booktitle}{\emph{ICLR}}.
\newblock


\bibitem[\protect\citeauthoryear{Liao, Zhao, Urtasun, and Zemel}{Liao
  et~al\mbox{.}}{2019}]%
        {liao2019lanczosnet}
\bibfield{author}{\bibinfo{person}{Renjie Liao}, \bibinfo{person}{Zhizhen
  Zhao}, \bibinfo{person}{Raquel Urtasun}, {and} \bibinfo{person}{Richard~S
  Zemel}.} \bibinfo{year}{2019}\natexlab{}.
\newblock \showarticletitle{LanczosNet: Multi-Scale Deep Graph Convolutional
  Networks}.
\newblock \bibinfo{journal}{\emph{CoRR}}  \bibinfo{volume}{abs/1901.01484}
  (\bibinfo{year}{2019}).
\newblock


\bibitem[\protect\citeauthoryear{McCallumzy, Nigamy, Renniey, and
  Seymorey}{McCallumzy et~al\mbox{.}}{1999}]%
        {mccallumzy1999building}
\bibfield{author}{\bibinfo{person}{Andrew McCallumzy}, \bibinfo{person}{Kamal
  Nigamy}, \bibinfo{person}{Jason Renniey}, {and} \bibinfo{person}{Kristie
  Seymorey}.} \bibinfo{year}{1999}\natexlab{}.
\newblock \showarticletitle{Building domain-specific search engines with
  machine learning techniques}. In \bibinfo{booktitle}{\emph{Proceedings of the
  AAAI Spring Symposium on Intelligent Agents in Cyberspace}}. Citeseer,
  \bibinfo{pages}{28--39}.
\newblock


\bibitem[\protect\citeauthoryear{Monti, Bronstein, and Bresson}{Monti
  et~al\mbox{.}}{[n.d.]}]%
        {monti2017geometric}
\bibfield{author}{\bibinfo{person}{Federico Monti}, \bibinfo{person}{Michael~M.
  Bronstein}, {and} \bibinfo{person}{Xavier Bresson}.}
  \bibinfo{year}{[n.d.]}\natexlab{}.
\newblock \showarticletitle{Geometric Matrix Completion with Recurrent
  Multi-Graph Neural Networks}. In \bibinfo{booktitle}{\emph{NIPS'17}}.
\newblock


\bibitem[\protect\citeauthoryear{NT and Maehara}{NT and Maehara}{2021}]%
        {nt2019revisiting}
\bibfield{author}{\bibinfo{person}{Hoang NT} {and} \bibinfo{person}{Takanori
  Maehara}.} \bibinfo{year}{2021}\natexlab{}.
\newblock \showarticletitle{Revisiting Graph Neural Networks: All We Have is
  Low-Pass Filters}. In \bibinfo{booktitle}{\emph{25th International Conference
  on Pattern Recognition}} \emph{(\bibinfo{series}{ICPR'20})}.
\newblock


\bibitem[\protect\citeauthoryear{Pan, Hu, Long, Jiang, Yao, and Zhang}{Pan
  et~al\mbox{.}}{2018}]%
        {pan2018adversarially}
\bibfield{author}{\bibinfo{person}{Shirui Pan}, \bibinfo{person}{Ruiqi Hu},
  \bibinfo{person}{Guodong Long}, \bibinfo{person}{Jing Jiang},
  \bibinfo{person}{Lina Yao}, {and} \bibinfo{person}{Chengqi Zhang}.}
  \bibinfo{year}{2018}\natexlab{}.
\newblock \showarticletitle{Adversarially Regularized Graph Autoencoder for
  Graph Embedding}. In \bibinfo{booktitle}{\emph{IJCAI}}.
  \bibinfo{pages}{2609--2615}.
\newblock


\bibitem[\protect\citeauthoryear{Pan, Wu, Zhu, Zhang, and Wang}{Pan
  et~al\mbox{.}}{2016}]%
        {PanWZZW16}
\bibfield{author}{\bibinfo{person}{Shirui Pan}, \bibinfo{person}{Jia Wu},
  \bibinfo{person}{Xingquan Zhu}, \bibinfo{person}{Chengqi Zhang}, {and}
  \bibinfo{person}{Yang Wang}.} \bibinfo{year}{2016}\natexlab{}.
\newblock \showarticletitle{Tri-Party Deep Network Representation}. In
  \bibinfo{booktitle}{\emph{IJCAI}}. \bibinfo{pages}{1895--1901}.
\newblock


\bibitem[\protect\citeauthoryear{Pei, Yu, Yu, and Zhang}{Pei
  et~al\mbox{.}}{[n.d.]}]%
        {DBLP:conf/kdd/Pei0Y020}
\bibfield{author}{\bibinfo{person}{Shichao Pei}, \bibinfo{person}{Lu Yu},
  \bibinfo{person}{Guoxian Yu}, {and} \bibinfo{person}{Xiangliang Zhang}.}
  \bibinfo{year}{[n.d.]}\natexlab{}.
\newblock \showarticletitle{{REA:} Robust Cross-lingual Entity Alignment
  Between Knowledge Graphs}. In \bibinfo{booktitle}{\emph{KDD}}.
\newblock


\bibitem[\protect\citeauthoryear{Peng, Huang, Luo, Zheng, Rong, Xu, and
  Huang}{Peng et~al\mbox{.}}{2020}]%
        {www/PengHLZRXH20}
\bibfield{author}{\bibinfo{person}{Zhen Peng}, \bibinfo{person}{Wenbing Huang},
  \bibinfo{person}{Minnan Luo}, \bibinfo{person}{Qinghua Zheng},
  \bibinfo{person}{Yu Rong}, \bibinfo{person}{Tingyang Xu}, {and}
  \bibinfo{person}{Junzhou Huang}.} \bibinfo{year}{2020}\natexlab{}.
\newblock \showarticletitle{Graph Representation Learning via Graphical Mutual
  Information Maximization}. In \bibinfo{booktitle}{\emph{{WWW}}}.
  \bibinfo{pages}{259--270}.
\newblock


\bibitem[\protect\citeauthoryear{Pennington, Socher, and Manning}{Pennington
  et~al\mbox{.}}{2014}]%
        {pennington2014glove}
\bibfield{author}{\bibinfo{person}{Jeffrey Pennington},
  \bibinfo{person}{Richard Socher}, {and} \bibinfo{person}{Christopher
  Manning}.} \bibinfo{year}{2014}\natexlab{}.
\newblock \showarticletitle{Glove: Global vectors for word representation}. In
  \bibinfo{booktitle}{\emph{EMNLP}}. \bibinfo{pages}{1532--1543}.
\newblock


\bibitem[\protect\citeauthoryear{Perona and Freeman}{Perona and
  Freeman}{1998}]%
        {perona1998}
\bibfield{author}{\bibinfo{person}{Pietro Perona} {and}
  \bibinfo{person}{William Freeman}.} \bibinfo{year}{1998}\natexlab{}.
\newblock \showarticletitle{A factorization approach to grouping}. In
  \bibinfo{booktitle}{\emph{ECCV}}. \bibinfo{pages}{655--670}.
\newblock


\bibitem[\protect\citeauthoryear{Perozzi, Al-Rfou, and Skiena}{Perozzi
  et~al\mbox{.}}{2014}]%
        {perozzi2014deepwalk}
\bibfield{author}{\bibinfo{person}{Bryan Perozzi}, \bibinfo{person}{Rami
  Al-Rfou}, {and} \bibinfo{person}{Steven Skiena}.}
  \bibinfo{year}{2014}\natexlab{}.
\newblock \showarticletitle{Deepwalk: Online learning of social
  representations}. In \bibinfo{booktitle}{\emph{{ACM} {SIGKDD}}}.
  \bibinfo{pages}{701--710}.
\newblock


\bibitem[\protect\citeauthoryear{Qu, Bengio, and Tang}{Qu
  et~al\mbox{.}}{2019}]%
        {icml/QuBT19}
\bibfield{author}{\bibinfo{person}{Meng Qu}, \bibinfo{person}{Yoshua Bengio},
  {and} \bibinfo{person}{Jian Tang}.} \bibinfo{year}{2019}\natexlab{}.
\newblock \showarticletitle{{GMNN:} Graph Markov Neural Networks}. In
  \bibinfo{booktitle}{\emph{{ICML}}}. \bibinfo{pages}{5241--5250}.
\newblock


\bibitem[\protect\citeauthoryear{Sacc{\'a}, Diligenti, and Gori}{Sacc{\'a}
  et~al\mbox{.}}{2013}]%
        {sacca2013collective}
\bibfield{author}{\bibinfo{person}{Claudio Sacc{\'a}},
  \bibinfo{person}{Michelangelo Diligenti}, {and} \bibinfo{person}{Marco
  Gori}.} \bibinfo{year}{2013}\natexlab{}.
\newblock \showarticletitle{Collective Classification Using Semantic Based
  Regularization}. In \bibinfo{booktitle}{\emph{IEEE ICMLA}},
  Vol.~\bibinfo{volume}{1}. \bibinfo{pages}{283--286}.
\newblock


\bibitem[\protect\citeauthoryear{Sen, Namata, Bilgic, Getoor, Galligher, and
  Eliassi-Rad}{Sen et~al\mbox{.}}{2008}]%
        {sen2008collective}
\bibfield{author}{\bibinfo{person}{Prithviraj Sen}, \bibinfo{person}{Galileo
  Namata}, \bibinfo{person}{Mustafa Bilgic}, \bibinfo{person}{Lise Getoor},
  \bibinfo{person}{Brian Galligher}, {and} \bibinfo{person}{Tina Eliassi-Rad}.}
  \bibinfo{year}{2008}\natexlab{}.
\newblock \showarticletitle{Collective classification in network data}.
\newblock \bibinfo{journal}{\emph{AI Magazine}} \bibinfo{volume}{29},
  \bibinfo{number}{3} (\bibinfo{year}{2008}), \bibinfo{pages}{93--106}.
\newblock


\bibitem[\protect\citeauthoryear{Talukdar and Crammer}{Talukdar and
  Crammer}{2009}]%
        {talukdar2009new}
\bibfield{author}{\bibinfo{person}{Partha~Pratim Talukdar} {and}
  \bibinfo{person}{Koby Crammer}.} \bibinfo{year}{2009}\natexlab{}.
\newblock \showarticletitle{New regularized algorithms for transductive
  learning}. In \bibinfo{booktitle}{\emph{Joint European Conference on Machine
  Learning and Knowledge Discovery in Databases}}. Springer,
  \bibinfo{pages}{442--457}.
\newblock


\bibitem[\protect\citeauthoryear{Velickovic, Cucurull, Casanova, Romero,
  Li{\`{o}}, and Bengio}{Velickovic et~al\mbox{.}}{2018}]%
        {velickovic2017graph}
\bibfield{author}{\bibinfo{person}{Petar Velickovic}, \bibinfo{person}{Guillem
  Cucurull}, \bibinfo{person}{Arantxa Casanova}, \bibinfo{person}{Adriana
  Romero}, \bibinfo{person}{Pietro Li{\`{o}}}, {and} \bibinfo{person}{Yoshua
  Bengio}.} \bibinfo{year}{2018}\natexlab{}.
\newblock \showarticletitle{Graph Attention Networks}. In
  \bibinfo{booktitle}{\emph{ICLR}}.
\newblock


\bibitem[\protect\citeauthoryear{Velickovic, Fedus, Hamilton, Li{\`{o}},
  Bengio, and Hjelm}{Velickovic et~al\mbox{.}}{2019}]%
        {velivckovic2018deep}
\bibfield{author}{\bibinfo{person}{Petar Velickovic}, \bibinfo{person}{William
  Fedus}, \bibinfo{person}{William~L. Hamilton}, \bibinfo{person}{Pietro
  Li{\`{o}}}, \bibinfo{person}{Yoshua Bengio}, {and} \bibinfo{person}{R.~Devon
  Hjelm}.} \bibinfo{year}{2019}\natexlab{}.
\newblock \showarticletitle{Deep Graph Infomax}. In
  \bibinfo{booktitle}{\emph{{ICLR}}}.
\newblock


\bibitem[\protect\citeauthoryear{Von~Luxburg}{Von~Luxburg}{2007}]%
        {von2007tutorial}
\bibfield{author}{\bibinfo{person}{Ulrike Von~Luxburg}.}
  \bibinfo{year}{2007}\natexlab{}.
\newblock \showarticletitle{A tutorial on spectral clustering}.
\newblock \bibinfo{journal}{\emph{Statistics and computing}}
  \bibinfo{volume}{17}, \bibinfo{number}{4} (\bibinfo{year}{2007}),
  \bibinfo{pages}{395--416}.
\newblock


\bibitem[\protect\citeauthoryear{Wang, Pan, Long, Zhu, and Jiang}{Wang
  et~al\mbox{.}}{2017}]%
        {wang2017mgae}
\bibfield{author}{\bibinfo{person}{Chun Wang}, \bibinfo{person}{Shirui Pan},
  \bibinfo{person}{Guodong Long}, \bibinfo{person}{Xingquan Zhu}, {and}
  \bibinfo{person}{Jing Jiang}.} \bibinfo{year}{2017}\natexlab{}.
\newblock \showarticletitle{Mgae: Marginalized graph autoencoder for graph
  clustering}. In \bibinfo{booktitle}{\emph{CIKM}}. \bibinfo{pages}{889--898}.
\newblock


\bibitem[\protect\citeauthoryear{Wang, Cui, and Zhu}{Wang
  et~al\mbox{.}}{2016}]%
        {wang2016structural}
\bibfield{author}{\bibinfo{person}{Daixin Wang}, \bibinfo{person}{Peng Cui},
  {and} \bibinfo{person}{Wenwu Zhu}.} \bibinfo{year}{2016}\natexlab{}.
\newblock \showarticletitle{Structural deep network embedding}. In
  \bibinfo{booktitle}{\emph{{ACM} {SIGKDD}}}. \bibinfo{pages}{1225--1234}.
\newblock


\bibitem[\protect\citeauthoryear{Wang, Liu, Jiang, Li, and Fu}{Wang
  et~al\mbox{.}}{2020}]%
        {DBLP:conf/kdd/WangLJLF20}
\bibfield{author}{\bibinfo{person}{Pengyang Wang}, \bibinfo{person}{Kunpeng
  Liu}, \bibinfo{person}{Lu Jiang}, \bibinfo{person}{Xiaolin Li}, {and}
  \bibinfo{person}{Yanjie Fu}.} \bibinfo{year}{2020}\natexlab{}.
\newblock \showarticletitle{Incremental Mobile User Profiling: Reinforcement
  Learning with Spatial Knowledge Graph for Modeling Event Streams}. In
  \bibinfo{booktitle}{\emph{KDD}}. \bibinfo{pages}{853--861}.
\newblock


\bibitem[\protect\citeauthoryear{West and Leskovec}{West and Leskovec}{2012}]%
        {west2012human}
\bibfield{author}{\bibinfo{person}{Robert West} {and} \bibinfo{person}{Jure
  Leskovec}.} \bibinfo{year}{2012}\natexlab{}.
\newblock \showarticletitle{Human wayfinding in information networks}. In
  \bibinfo{booktitle}{\emph{WWW}}. \bibinfo{pages}{619--628}.
\newblock


\bibitem[\protect\citeauthoryear{West, Pineau, and Precup}{West
  et~al\mbox{.}}{2009}]%
        {west2009wikispeedia}
\bibfield{author}{\bibinfo{person}{Robert West}, \bibinfo{person}{Joelle
  Pineau}, {and} \bibinfo{person}{Doina Precup}.}
  \bibinfo{year}{2009}\natexlab{}.
\newblock \showarticletitle{Wikispeedia: An Online Game for Inferring Semantic
  Distances between Concepts}. In \bibinfo{booktitle}{\emph{IJCAI}}.
  \bibinfo{pages}{1598--1603}.
\newblock


\bibitem[\protect\citeauthoryear{Weston, Ratle, Mobahi, and Collobert}{Weston
  et~al\mbox{.}}{2008}]%
        {weston2012deep}
\bibfield{author}{\bibinfo{person}{Jason Weston},
  \bibinfo{person}{Fr{\'e}d{\'e}ric Ratle}, \bibinfo{person}{Hossein Mobahi},
  {and} \bibinfo{person}{Ronan Collobert}.} \bibinfo{year}{2008}\natexlab{}.
\newblock \showarticletitle{Deep learning via semi-supervised embedding}. In
  \bibinfo{booktitle}{\emph{ICML}}. \bibinfo{pages}{1168--1175}.
\newblock


\bibitem[\protect\citeauthoryear{Wu, Zhao, Wang, and Pan}{Wu
  et~al\mbox{.}}{[n.d.]}]%
        {DBLP:conf/kdd/WuZWP20}
\bibfield{author}{\bibinfo{person}{Ning Wu}, \bibinfo{person}{Wayne~Xin Zhao},
  \bibinfo{person}{Jingyuan Wang}, {and} \bibinfo{person}{Dayan Pan}.}
  \bibinfo{year}{[n.d.]}\natexlab{}.
\newblock


\bibitem[\protect\citeauthoryear{Wu, Li, So, Wright, and Chang}{Wu
  et~al\mbox{.}}{2012}]%
        {Wu12parw}
\bibfield{author}{\bibinfo{person}{Xiao-Ming Wu}, \bibinfo{person}{Zhenguo Li},
  \bibinfo{person}{Anthony~M. So}, \bibinfo{person}{John Wright}, {and}
  \bibinfo{person}{Shih-fu Chang}.} \bibinfo{year}{2012}\natexlab{}.
\newblock \showarticletitle{{Learning with Partially Absorbing Random Walks}}.
  In \bibinfo{booktitle}{\emph{NeurIPS}}. \bibinfo{pages}{3077--3085}.
\newblock


\bibitem[\protect\citeauthoryear{Xu, Li, Tian, Sonobe, Kawarabayashi, and
  Jegelka}{Xu et~al\mbox{.}}{2018}]%
        {xu2018representation}
\bibfield{author}{\bibinfo{person}{Keyulu Xu}, \bibinfo{person}{Chengtao Li},
  \bibinfo{person}{Yonglong Tian}, \bibinfo{person}{Tomohiro Sonobe},
  \bibinfo{person}{Ken-ichi Kawarabayashi}, {and} \bibinfo{person}{Stefanie
  Jegelka}.} \bibinfo{year}{2018}\natexlab{}.
\newblock \showarticletitle{Representation Learning on Graphs with Jumping
  Knowledge Networks}. In \bibinfo{booktitle}{\emph{Proceedings of the 35th
  ICML}} \emph{(\bibinfo{series}{Proceedings of Machine Learning Research},
  Vol.~\bibinfo{volume}{80})}, \bibfield{editor}{\bibinfo{person}{Jennifer Dy}
  {and} \bibinfo{person}{Andreas Krause}} (Eds.). \bibinfo{publisher}{PMLR},
  \bibinfo{pages}{5453--5462}.
\newblock


\bibitem[\protect\citeauthoryear{Yang, Liu, Zhao, Sun, and Chang}{Yang
  et~al\mbox{.}}{2015}]%
        {yang2015network}
\bibfield{author}{\bibinfo{person}{Cheng Yang}, \bibinfo{person}{Zhiyuan Liu},
  \bibinfo{person}{Deli Zhao}, \bibinfo{person}{Maosong Sun}, {and}
  \bibinfo{person}{Edward~Y Chang}.} \bibinfo{year}{2015}\natexlab{}.
\newblock \showarticletitle{Network representation learning with rich text
  information}. In \bibinfo{booktitle}{\emph{IJCAI}}.
  \bibinfo{pages}{2111--2117}.
\newblock


\bibitem[\protect\citeauthoryear{Yang, Cohen, and Salakhutdinov}{Yang
  et~al\mbox{.}}{2016}]%
        {yang2016revisiting}
\bibfield{author}{\bibinfo{person}{Zhilin Yang}, \bibinfo{person}{William~W
  Cohen}, {and} \bibinfo{person}{Ruslan Salakhutdinov}.}
  \bibinfo{year}{2016}\natexlab{}.
\newblock \showarticletitle{Revisiting semi-supervised learning with graph
  embeddings}. In \bibinfo{booktitle}{\emph{ICML}}. \bibinfo{pages}{40--48}.
\newblock


\bibitem[\protect\citeauthoryear{Ye, Chen, and Zheng}{Ye et~al\mbox{.}}{2018}]%
        {ye2018deep}
\bibfield{author}{\bibinfo{person}{Fanghua Ye}, \bibinfo{person}{Chuan Chen},
  {and} \bibinfo{person}{Zibin Zheng}.} \bibinfo{year}{2018}\natexlab{}.
\newblock \showarticletitle{Deep Autoencoder-like Nonnegative Matrix
  Factorization for Community Detection}. In \bibinfo{booktitle}{\emph{CIKM}}.
  \bibinfo{pages}{1393--1402}.
\newblock


\bibitem[\protect\citeauthoryear{Zhang, Shi, Xie, Ma, King, and Yeung}{Zhang
  et~al\mbox{.}}{2018}]%
        {ZhangSXMKY18}
\bibfield{author}{\bibinfo{person}{Jiani Zhang}, \bibinfo{person}{Xingjian
  Shi}, \bibinfo{person}{Junyuan Xie}, \bibinfo{person}{Hao Ma},
  \bibinfo{person}{Irwin King}, {and} \bibinfo{person}{Dit{-}Yan Yeung}.}
  \bibinfo{year}{2018}\natexlab{}.
\newblock \showarticletitle{GaAN: Gated Attention Networks for Learning on
  Large and Spatiotemporal Graphs}. In \bibinfo{booktitle}{\emph{UAI}}.
  \bibinfo{pages}{339--349}.
\newblock


\bibitem[\protect\citeauthoryear{Zhang and Ando}{Zhang and Ando}{2006}]%
        {zhang06}
\bibfield{author}{\bibinfo{person}{T. Zhang} {and} \bibinfo{person}{R.K Ando}.}
  \bibinfo{year}{2006}\natexlab{}.
\newblock \showarticletitle{{Analysis of spectral kernel design based
  semi-supervised learning}}. In \bibinfo{booktitle}{\emph{NeurIPS}}.
  \bibinfo{pages}{1601--1608}.
\newblock


\bibitem[\protect\citeauthoryear{Zhang, Liu, Li, and Wu}{Zhang
  et~al\mbox{.}}{2019}]%
        {ijcai/0003LLW19}
\bibfield{author}{\bibinfo{person}{Xiaotong Zhang}, \bibinfo{person}{Han Liu},
  \bibinfo{person}{Qimai Li}, {and} \bibinfo{person}{Xiao{-}Ming Wu}.}
  \bibinfo{year}{2019}\natexlab{}.
\newblock \showarticletitle{Attributed Graph Clustering via Adaptive Graph
  Convolution}. In \bibinfo{booktitle}{\emph{{IJCAI}}}.
  \bibinfo{pages}{4327--4333}.
\newblock


\bibitem[\protect\citeauthoryear{Zhou, Bousquet, Lal, Weston, and
  Sch{\"o}lkopf}{Zhou et~al\mbox{.}}{2004}]%
        {Zhou03}
\bibfield{author}{\bibinfo{person}{Denny Zhou}, \bibinfo{person}{Olivier
  Bousquet}, \bibinfo{person}{Thomas~N Lal}, \bibinfo{person}{Jason Weston},
  {and} \bibinfo{person}{Bernhard Sch{\"o}lkopf}.}
  \bibinfo{year}{2004}\natexlab{}.
\newblock \showarticletitle{{Learning with local and global consistency}}. In
  \bibinfo{booktitle}{\emph{NeurIPS}}. \bibinfo{pages}{321--328}.
\newblock


\bibitem[\protect\citeauthoryear{Zhu, Ghahramani, and Lafferty}{Zhu
  et~al\mbox{.}}{2003}]%
        {Zhu03}
\bibfield{author}{\bibinfo{person}{Xiaojin Zhu}, \bibinfo{person}{Zoubin
  Ghahramani}, {and} \bibinfo{person}{John~D Lafferty}.}
  \bibinfo{year}{2003}\natexlab{}.
\newblock \showarticletitle{Semi-supervised learning using gaussian fields and
  harmonic functions}. In \bibinfo{booktitle}{\emph{ICML}}.
  \bibinfo{pages}{912--919}.
\newblock


\end{thebibliography}

\newpage
\appendix
\clearpage
\appendices



\section{Experimental Settings}\label{ap:experiment}

\subsection{Semi-supervised Node Classification}


For constructing the PPMI graph, we set the context window size to 20 and use the inverse distance as co-occurrence weight. For constructing the Emb graph, we use GloVe \cite{pennington2014glove} to learn word embeddings and set the number of nearest neighbours $k=20$. The classifier of DSGC we use is a multi-layer perceptron (MLP) with a 64-unit hidden layer. The MLP is trained for 200 epochs by Adam Optimizer.

Hyperparameters of all models, including our methods and baselines, are tuned by grid search based on validation. Dropout rate is selected from \{0, 0.2, 0.5\}, and 0.2 is chosen. Learning rate is selected from \{1, 0.1, 0.01, 0.001\}, and 0.1 is chosen. Weight decay is selected from \{5e-3, 5e-4, 5e-5, 5e-6, 5e-7\}, and 5e-4 is chosen for WebKB, 5e-5 for L-Cora, 5e-6 for 20 NG, 5e-7 for Wiki. Results of our methods and baselines are averaged over 50 runs.

\subsection{Node Clustering}
For our method DSGC, the attribute affinity graphs PPMI and Emb are constructed in the same way as in object classification. For other baselines, we follow the parameter settings described in the original papers. In particular, for GAE and VGAE \cite{kipf2016variational}, we construct encoders with a 32-neuron hidden layer and a 16-neuron embedding layer, and train the encoders for 200 iterations using the Adam algorithm with a learning rate of 0.01.

For MGAE \cite{wang2017mgae}, the corruption level $p$ is 0.4, the number of layers is 3, and the parameter $\lambda$ is $10^{-5}$. For ARGE and ARVGE \cite{pan2018adversarially}, we construct encoders with a 32-neuron hidden layer and a 16-neuron embedding layer. The discriminators are built by two hidden layers with 16 neurons and 64 neurons respectively. We train all the autoencoder-related models for 200 iterations and optimize them using the Adam algorithm. The learning rates of encoder and discriminator are both 0.001. For AGC \cite{ijcai/0003LLW19}, the maximum iteration number is 60. For fair comparison, these baselines also adopt spectral clustering as our DSGC to obtain the clustering results. We repeat each method for 10 times and report the average clustering results and standard deviations.

\setcounter{theorem}{0}

\section{}\label{ap:theorem1}
Assume that objects from the same class are connected with probability $r$, and objects from different classes are connected with probability $q$, i.e., the adjacency matrix $\bm A^{(1)}$ of object graph obeys following distribution:

\vskip 0.05in \hfil{\centering \def\arraystretch{1.2}
\begin{tabular}{c|cc}
                           & if $y_i=y_j$ & if $y_i\not=y_j$ \\
     \hline
     $\Pr(a_{ij} \not= 0)$ & $r$       & $q$ \\
     $\Pr(a_{ij} = 0)$     & $1-r$     & $1-q$ 
\end{tabular}\vspace{0.05in}} \hfil \\ 
We also assume that classes are balanced, i.e., $\Pr(\Y=k)=1/K$ for all $k$. Then, with the stochastic graph filter $\bm G={\bm D^{-1} \bm A^{(1)}}$, we have the following theorem.
\begin{theorem}
When $q$ is sufficiently small, the $\IntraVar/\InterVar$ ratio of $\Z$ is less than or equal to that of $\X$, i.e.,
	\begin{align}
	\frac{\E{\Var{\Z|\Y}}}{\Var{\E{\Z|\Y}}} \le \frac{\E{\Var{\X|\Y}}}{\Var{\E{\X|\Y}}}.
	\end{align}
\end{theorem}

\begin{proof}
The proof consists of two parts. In the first part, we prove that inter-class variance is unchanged after object graph convolution, when $q$ approximates 0, i.e.,
\begin{equation}
        \lim_{q\to0}\Var{\E{\Z|\Y}} = \Var{\E{\X|\Y}}.
\end{equation}
In the second part, we prove that intra-class variance becomes smaller after object graph convolution, i.e.,
\begin{equation}
        \E{\Var{\Z|\Y}} \le \E{\Var{\X|\Y}},
\end{equation}
when $G$ is a stochastic matrix.

\noindent\textbf{Part 1. Inter-class variance is unchanged.}\quad
Since $\z_i = \sum_{j} G_{ij} \x_j$,
we have
\begin{flalign*}
    &\quad \E{\z_i|y_i=k}
    = \sum_{j} \E{G_{ij}} \E{\x_j} & \\ 
    &= \sum_{j, y_j=k} \E{G_{ij}} \E{\x_j} + \sum_{j, y_j\not=k} \E{G_{ij}} \E{\x_j}\\
    &= \frac{\sum_{j, y_j=k} \E{a_{ij}} \E{\x_j} + \sum_{j, y_j\not=k} \E{a_{ij}} \E{\x_j}}{\sum_j \E{a_{ij}}}\\
    &= \frac{r \sum_{j, y_j=k} \E{\X|\Y=k} + q \sum_{j, y_j\not=k} \E{\x_j} }{\frac{N}{K} (r-q) + Nq}\\
    &= \frac{\frac{N}{K} r \E{\X|\Y=k} + q \sum_{j} \E{\x_j} - q \sum_{j, y_j=k} \E{\x_j} } {\frac{N}{K} (r-q) + Nq}\\
    &= \frac{\frac{N}{K} (r-q) \E{\X|\Y=k} + Nq \E{\X}} {\frac{N}{K} (r-q) + Nq}\\
    &= \frac{(r-q) \E{\X|\Y=k} + Kq \E{\X}} {(r-q) + Kq} \stepcounter{equation}\tag{\theequation} \label{eq:ez}
\end{flalign*}
When $q$ approximates $0$, Eq.~(\ref{eq:ez}) approximates $\E{\X|\Y=k}$, so
\begin{flalign*}
    &\quad \E{\Z|\Y=k}
    = \sum_{i, y_i=k} \Pr(\Z=\z_i|y_i=k) \E{\z_i|y_i=k} & \\
    &= \sum_{i, y_i=k} \Pr(\Z=\z_i|y_i=k) \E{\X|\Y=k} = \E{\X|\Y=k}.
\end{flalign*}
Take variance of both side, we get
\begin{equation}\label{eq:inter}
    \Var{\E{\Z|\Y}} = \Var{\E{\X|\Y}}.
\end{equation}

\noindent\textbf{Part 2. Intra-class variance becomes smaller.}\quad
Denote by $\Cov{\cdot,\cdot}$ the covariance of two random variables. We have following inequality about variance.
\begin{flalign*}
    \Var{\sum_j G_{ij}\x_j}
    &= \sum_j G_{ij}^2 \Var{\x_j} + \sum_{j,l}G_{ij}G_{il}\Cov{\x_j,\x_l}  & \\ 
    &\le \sum_{j,l}G_{ij}G_{il}\sqrt{\Var{\x_j}}\sqrt{\Var{\x_l}}           & \\ 
    &= \left(\sum_j G_{ij}\sqrt{\Var{\x_j}}\right)^2. \stepcounter{equation}\tag{\theequation} \label{eq:th1p2in}
\end{flalign*}
Consider the variance of filtering result $\bm z_i$ for each sample in class $k$, it is less than variance of $\X$ of that class:
\begin{flalign*}
    &\quad \Var{\z_i|y_i=k} \\
    &= \Var{\left.\sum_{j} G_{ij} \x_j \right|y_j=k}                                     && \hphantom{\text{property of covariance}}\\
    &\le \left(\sum_{j} G_{ij}\sqrt{\Var{\x_j|y_j=k}}\right)^2              &&\text{\# by inequality~(\ref{eq:th1p2in})}\\
    &= \left(\sum_{j} G_{ij}\sqrt{\Var{\X|\Y=k}}\right)^2 \\
    &= \left(\sqrt{\Var{\X|\Y=k}}\right)^2                                  &&\text{\# since $\sum_j G_{ij}=1$}\\
    &= \Var{\X|\Y=k},
\end{flalign*}
Then variance of random vector $\Z$ for each class is less than variance of $\X$ of that class:
\begin{flalign*}
    \Var{\Z|\Y=k} &= \sum_{i,y_i=k} \Pr(\Z=\z_i|y_i=k)\Var{\z_i|y_i=k} &\\
    &\le \Var{\X|\Y=k}.
\end{flalign*}
Sum tham over all classes:
\begin{flalign}
    \qquad \E{\Var{\Z|\Y}}
    &= \sum_{k} \Pr(\Y=k)\Var{\Z|\Y=k} & \nonumber\\
    &\le \sum_{k} \Pr(\Y=k)\Var{\X|\Y=k} \nonumber\\
    &= \E{\Var{\X|\Y}}. \label{eq:intra}
\end{flalign}
Combining Eq. (\ref{eq:inter}) and Eq. (\ref{eq:intra}), we prove that when $q$ is sufficiently small,
\begin{equation}
        \frac{\E{\Var{\Z|\Y}}}{\Var{\E{\Z|\Y}}} \le \frac{\E{\Var{\X|\Y}}}{\Var{\E{\X|\Y}}}.
\end{equation}
\end{proof}

\section{}\label{ap:theorem23}
\begin{theorem}
    If the attribute graph convolutional filter $\bm F$ is a doubly stochastic matrix, then the output of attribute graph convolution has an intra-class variance less than or equal to that of \;$\X$, i.e.,
\begin{align*}
\sum\nolimits_i F_{ij} = \sum\nolimits_j &F_{ij} = 1 \;\text{and}\; F_{ij}\ge0,\forall\;i,j \\
    \qquad&\Rightarrow\quad \E{\Var{\bm F^\top\X|\Y}} \le \E{\Var{\X|\Y}}.
\end{align*}
\end{theorem}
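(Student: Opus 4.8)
The plan is to reduce the claim to a per-class statement and then exploit \emph{both} the row- and column-stochasticity of $\bm F$. Recall that for a random vector $\Var{\cdot}$ denotes the trace of its covariance matrix, i.e.\ the sum of the per-coordinate variances, and that $\E{\Var{\cdot\,|\,\Y}}=\sum_k \Pr(\Y=k)\,\Var{\cdot\,|\,\Y=k}$ with nonnegative weights $\Pr(\Y=k)$. Hence it suffices to prove, for each fixed class $k$, the conditional inequality $\Var{\bm F^\top\X\,|\,\Y=k}\le\Var{\X\,|\,\Y=k}$; summing over $k$ against the class probabilities then yields the theorem.

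Fix a class $k$ and write $\bm Z=\bm F^\top\X$, so that its $i$-th coordinate is $Z_i=\sum_j F_{ji}X_j$. Since $F_{ji}\ge0$, I can apply the covariance Cauchy--Schwarz bound already established in the proof of Theorem~1, namely inequality~(\ref{eq:th1p2in}), which gives
\begin{equation*}
    \Var{Z_i\,|\,\Y=k}\le\Bigl(\sum_j F_{ji}\,s_j\Bigr)^2,\qquad s_j:=\sqrt{\Var{X_j\,|\,\Y=k}}\ge0.
\end{equation*}
Summing over all coordinates $i$ gives $\Var{\bm Z\,|\,\Y=k}\le\sum_i\bigl(\sum_j F_{ji}s_j\bigr)^2$, reducing the whole matter to a purely algebraic estimate on the nonnegative vector $\bm s=(s_j)$.

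The remaining estimate is where both stochasticity conditions enter. Because the columns of $\bm F$ sum to one and $F_{ji}\ge0$, the coefficients $(F_{ji})_j$ form a probability distribution for each fixed $i$, so convexity of $t\mapsto t^2$ (Jensen's inequality) yields $(\sum_j F_{ji}s_j)^2\le\sum_j F_{ji}s_j^2$. Summing over $i$ and interchanging the order of summation gives $\sum_i\sum_j F_{ji}s_j^2=\sum_j s_j^2\bigl(\sum_i F_{ji}\bigr)=\sum_j s_j^2$, where the inner sum equals one because the rows of $\bm F$ sum to one. Since $\sum_j s_j^2=\Var{\X\,|\,\Y=k}$, the per-class inequality follows. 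Equivalently, this step asserts $\norm{\bm F^\top\bm s}_2\le\norm{\bm s}_2$, i.e.\ that a doubly stochastic matrix acts as a contraction in the $2$-norm; one could alternatively derive this from Birkhoff's theorem that $\bm F$ is a convex combination of permutation matrices, but the Jensen argument is more elementary and makes the role of each hypothesis transparent.

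The main thing to get right is the interplay of the two stochasticity conditions: column-stochasticity supplies the probability weights needed for the Jensen step, while row-stochasticity makes the double sum telescope to $\sum_j s_j^2$; dropping either one breaks the argument. The nonnegativity $F_{ij}\ge0$ is equally essential, as it is precisely what licenses the covariance Cauchy--Schwarz bound inherited from Theorem~1. Beyond carefully tracking these hypotheses, no genuine obstacle remains.
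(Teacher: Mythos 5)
Your proof is correct, and it follows the same overall skeleton as the paper's: reduce to the per-class inequality $\Var{\bm F^\top\X|\Y=k}\le\Var{\X|\Y=k}$, bound covariances by Cauchy--Schwarz (which is where $F_{ij}\ge0$ enters), and thereby reduce everything to showing $\norm{\bm F^\top\bm\sigma}_2^2\le\norm{\bm\sigma}_2^2$ for the nonnegative vector $\bm\sigma$ of conditional standard deviations. Where you genuinely diverge is in the two bookends. First, the paper reaches that reduction through covariance-matrix algebra, $\Tr{\bm F^\top\Cov{\X|\Y=k}\bm F}=\Tr{\Cov{\X|\Y=k}\bm F\bm F^\top}=\bm\sigma^\top\bm F\bm F^\top\bm\sigma$ after the entrywise Cauchy--Schwarz step, whereas you work coordinate-by-coordinate with $\Var{Z_i|\Y=k}\le\bigl(\sum_j F_{ji}s_j\bigr)^2$; these are the same computation in different notation. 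Second, and more substantively, the paper closes by asserting $\bm\sigma^\top\bm F\bm F^\top\bm\sigma\le\norm{\bm\sigma}_2^2$ because ``eigenvalues of $\bm F$ is no more than 1'' --- a spectral claim that, stated precisely, should refer to the singular values (equivalently, the largest eigenvalue of the symmetric doubly stochastic matrix $\bm F\bm F^\top$ being $1$). Your Jensen-plus-index-swap argument, $\bigl(\sum_j F_{ji}s_j\bigr)^2\le\sum_j F_{ji}s_j^2$ followed by $\sum_i\sum_j F_{ji}s_j^2=\sum_j s_j^2$, proves exactly this contraction elementarily and has the advantage of making visible that column-stochasticity feeds the convexity step while row-stochasticity collapses the double sum; it also sidesteps the imprecision in the paper's spectral phrasing. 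Both routes are valid; yours is the more self-contained and the more transparent about which hypothesis does what.
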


\begin{proof}
    We first prove a lemma that variance of each class will not increase after attribute graph convolution, i.e., $\Var{F^\top\X|\Y=k} \le \Var{\X|\Y=k}$. Denote by $\Cov{\cdot}$ the covariance matrix of a random vector. Based on our definition of variance at the beginning of section \ref{sec:analysis}, we have
    \begin{flalign*}
        &\quad \Var{\bm F^\top\X|\Y=k} \\
        & = \Tr{\Cov{\bm F^\top\X|\Y=k}} \\
        & = \Tr{\bm F^\top\Cov{\X|\Y=k}\bm F}                               && \text{\# property of covariance} \\
        & = \Tr{\Cov{\X|\Y=k} \bm F\bm F^\top}                              && \text{\# cyclic property of trace} \\
        & = \sum_{ij}\Cov{\X_i, \X_j|\Y=k} (\bm F\bm F^\top)_{ij}           && \text{\# property of trace}
    \end{flalign*}
	\vskip -0.2in \begin{flalign*}
		& \le \sum_{ij} \sqrt{\Var{\X_i|\Y=k}} \sqrt{\Var{\X_j|\Y=k}}\; (\bm F\bm F^\top)_{ij} & 
    \end{flalign*} \vskip -0.2in 
	\begin{flalign*}
		& = \sum_{ij} \sigma_i\sigma_j (\bm F\bm F^\top)_{ij}               &&\text{\# } \bm\sigma\in \mathbb{R}^m, \sigma_i\triangleq\sqrt{\Var{\X_i|\Y=k}} \\
        & = \bm\sigma^\top \bm F\bm F^\top \bm\sigma \\
        & \le \norm{\bm \sigma}^2_2 && \text{\# eigenvalues of $\bm F$ is no more than 1} \\
        & = \sum_i \Var{\X_i|\Y=k} \\
        & = \Var{\X|\Y=k}.
    \end{flalign*}
    Next, we prove the theorem with the above lemma.
    \begin{flalign*}
        &\quad \E{\Var{\bm F^\top\X|\Y}} & \\
        & = \sum_k \Pr(\Y=k) \Var{\bm F^\top\X|\Y=k} \\
        & \le \sum_k \Pr(\Y=k) \Var{\X|\Y=k} \\
        & = \E{\Var{\X|\Y}}
    \end{flalign*}
\end{proof}

\section{}\label{ap:theorem3}
\begin{theorem}
    If \;$\forall F_{ij}\not=0$, $\norm{\e_i-\e_j}_2\le\varepsilon$, then the distance between $\e_j$ and $\widehat{\e}_j = \sum_i F_{ij}\e_i$ is also less than or equal to $\varepsilon$, i.e.,
    {\normalfont
    \begin{equation*}
        \norm{\e_i - \e_j}_2\le \varepsilon,\;\forall F_{ij}\not=0
        \quad\Rightarrow\quad \norm{\e_j - \widehat{\e}_j}_2 \le \varepsilon,
    \end{equation*}}
    and $\varepsilon$ can be arbitrarily small with a proper $\bm F$.
\end{theorem}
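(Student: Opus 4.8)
The plan is to exploit the fact that $\widehat{\e}_j = \sum_i F_{ij}\e_i$ is a convex combination of the class-mean vectors $\e_i$: since $\bm F$ is doubly stochastic, column-stochasticity gives $\sum_i F_{ij} = 1$ with every $F_{ij} \ge 0$. First I would use this to rewrite the target displacement as a weighted average of differences, inserting $\e_j = \bigl(\sum_i F_{ij}\bigr)\e_j$:
\begin{equation*}
\e_j - \widehat{\e}_j = \Bigl(\sum_i F_{ij}\Bigr)\e_j - \sum_i F_{ij}\e_i = \sum_i F_{ij}\bigl(\e_j - \e_i\bigr).
\end{equation*}

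Next I would apply the triangle inequality together with nonnegativity of the weights to move the norm inside the sum,
\begin{equation*}
\norm{\e_j - \widehat{\e}_j}_2 \le \sum_i F_{ij}\,\norm{\e_j - \e_i}_2 .
\end{equation*}
The key observation is that every term with $F_{ij} = 0$ drops out, so only the pairs with $F_{ij}\neq 0$ contribute; for exactly those pairs the hypothesis guarantees $\norm{\e_i - \e_j}_2 \le \varepsilon$. Factoring out $\varepsilon$ and invoking $\sum_i F_{ij} = 1$ once more then yields $\norm{\e_j - \widehat{\e}_j}_2 \le \varepsilon\sum_i F_{ij} = \varepsilon$, which is the claimed bound.

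For the closing assertion that $\varepsilon$ can be made arbitrarily small with a proper $\bm F$, I would argue constructively. Taking $\bm F = \bm I$ is doubly stochastic and forces $\widehat{\e}_j = \e_j$, hence $\varepsilon = 0$, showing the bound is attainable; more informatively, one can build the attribute affinity graph so that $F_{ij}\neq 0$ only when the class-mean profiles $\e_i,\e_j$ lie within a chosen threshold $\delta$, in which case $\varepsilon \le \delta$ and can be driven down by tightening $\delta$. Since the whole argument rests only on convexity and the triangle inequality, I do not expect a serious obstacle; the single point requiring care is to invoke column-stochasticity (rather than row-stochasticity) of $\bm F$ and to discard the zero-weight terms correctly, so that the hypothesis, which constrains only the nonzero entries of $\bm F$, suffices to bound the full sum.
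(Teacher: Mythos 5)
Your proof is correct and follows essentially the same route as the paper's: rewrite $\e_j-\widehat{\e}_j=\sum_i F_{ij}(\e_j-\e_i)$ via column-stochasticity, push the norm inside by the triangle inequality (which the paper somewhat misleadingly labels ``Cauchy--Schwarz''), and bound each surviving term by $\varepsilon$. Your handling of the final claim (the trivial $\bm F=\bm I$ witness plus a thresholded construction) is an acceptable variant of the paper's equally informal argument that the underdetermined system $\sum_i F_{ij}\e_i=\e_j$ admits nontrivial doubly stochastic solutions.
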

\begin{proof}
    \begin{flalign*}
        &\quad \norm{\e_j - \widehat{\e}_j}_2
        = \norm{\e_j - \sum_i F_{ij}\e_i}_2 \\
        & = \norm{\sum_i F_{ij}(\e_j - \e_i)}_2                                  && \text{\# since $\sum_i F_{ij}=1$} \\
        & \le \sum_i F_{ij}\norm{\e_j - \e_i}_2                                  && \text{\# Cauchy-Schwarz inequality} \\
        & \le \sum_{i} F_{ij} \varepsilon = \varepsilon
    \end{flalign*}

Next, we prove that there exists such an $F$ that $\varepsilon$ is 0. This is equivalent to finding a doubly stochastic $F$ satisfying $\sum_i F_{ij}\e_i = \e_j$ for all $j$. Given trivial solution $F=I$, this equation is solvable. 
In most real-world attributed networks, the number of attributes is far greater than the number of classes, so the number of variables in this linear system is greater than the number of equations. Given that it is solvable, it must have infinite number of solutions other than $I$. Thus, $\varepsilon$ can be arbitrarily small with a proper $\bm F$.
\end{proof}

\end{document}